\title{Federated Learning as a Network Effects Game}
\newtheorem{theorem}{Theorem}
\newtheorem{lemma}[theorem]{Lemma}
\newtheorem{corollary}[theorem]{Corollary}
\newtheorem{definition}[theorem]{Definition}
\newtheorem{remark}[theorem]{Remark}
\newcommand{\calN}{\mathcal{N}}
\newcommand{\calD}{\mathcal{D}}
\newcommand{\E}{\mathbb{E}}
\newcommand{\infer}{\text{inferred}}
\newcommand{\1}{\mathbf{1}}
\DeclareMathOperator*{\argmin}{arg\,min}
\newcommand{\ie}{{\em i.e.,~\xspace}}
\newcommand{\eg}{{\em e.g.,~\xspace}}
\newcommand{\etc}{{\em etc.}}
\newcommand{\bw}{\mathbf{w}}
\begin{document}
\author[1]{Shengyuan Hu}
\author[2]{Dung Daniel Ngo}
\author[1]{Shuran Zheng}
\author[1]{Virginia Smith}
\author[1]{Zhiwei Steven Wu}
\affil[1]{Carnegie Mellon University, \{shengyua, shuranzh, smithv, zstevenwu\}@cmu.edu}
\affil[2]{University of Minnesota, \{ngo00054\}@umn.edu}

\maketitle

\begin{abstract}
  Federated Learning (FL) aims to foster collaboration among a population of clients to improve the accuracy of machine learning without directly sharing local data. Although there has been rich literature on designing federated learning algorithms, most prior works implicitly assume that all clients are willing to participate in a FL scheme. 
  In practice, clients may not benefit from joining in FL, especially in light of potential costs related to issues such as privacy and computation. 
  In this work, we study the clients' incentives in federated learning to help the service provider design better solutions and ensure clients make better decisions. We are the first to model clients' behaviors in FL as a network effects game, where each client's benefit depends on other clients who also join the network. Using this setup we analyze the dynamics of clients' participation and characterize the equilibrium, where no client has incentives to alter their decision. Specifically, we show that dynamics in the population naturally converge to equilibrium without needing explicit interventions. Finally, we provide a cost-efficient payment scheme that incentivizes clients to reach a desired equilibrium when the initial network is empty.
\end{abstract}
\section{Introduction} 
Federated Learning (FL) is a distributed learning paradigm that enables a network of clients (\eg  mobile devices, hospitals) to jointly learn a model without sharing their private local data~\cite{mcmahan2017communication,kairouz2021advances}. In real-world applications, the performance of the model on a particular client may not be improved through federation due to heterogeneity in the data generated by federated networks \citep{li2021ditto}. Moreover, even if a particular client has improved model utility on their local data by joining FL training, they may suffer from costs induced by joining the federated network. For example, the communication or latency in FL may affect the quality of clients' user experience and the clients may suffer some privacy costs for sharing sensitive information. 
Understanding client incentives in FL is  crucial for the service provider to design better solutions and for each client to make better decisions. 

There is growing literature on incentive mechanisms in federated learning. Most of these works aim to study how to combine the data from all the clients to achieve a low error rate and ensure fairness for participating clients. However, these works are often based on a common assumption that all clients are willing participants who will join the federated network in order to achieve different learning goals (\eg minimize the sample complexity \citep{blum2021OneFO}, learn coalition structures \citep{modelsharing2020donahue}, \etc) In practice, rational clients can decide to either opt-in or opt-out of federated training based on their utility gain and cost as the network grows over time. The formal study of such a dynamic has not been captured in prior works studying incentives in federated learning. 

Understanding how incentives change in response to varying client participation is essential for the successful development of incentive mechanisms in federated learning.
In this work, we propose to model the clients' behaviors in FL through the lens of network effects games~\citep{katz1985network,shapiro2008information}. A network effects game models the benefit of each individual after aligning their behavior with the behaviors of other people in a coalition. This participation game closely relates to the client's behavior in FL where joining the federated training improves the utility compared to only training a model on local data. Traditionally, network effects games have not been looked at in the context of FL or more broadly, data sharing. Existing work in FL typically models client participation as a one-time game where the clients arrive and make decisions once before leaving. We instead focus on modeling the dynamic of client participation in a setting where (1) each client incurs a cost while joining the network, and  (2) each client is willing to join the network \textit{if and only if} their utility gain outweighs the cost. 
The goal of our network effects analysis is to find an equilibrium where no client has an incentive to alter their decision.

Based on such a formulation, we consider two settings: (1) a simple and stylized mean estimation problem (\Cref{sec:homogenenous-setting}) and (2) a general setting that requires an oracle reporting the true utility of being in a coalition to all clients (\Cref{sec:utility-oracle}). For each setting, we fully characterize the dynamics of the client participation game under the network effects model.  
A key feature of FL is that clients’ utility of joining depends on the number of participants/data points. Since different clients may have different utilities and costs, it may be easier to gradually incentivize participation over time. For example, we may first target the clients who have lower costs and are more willing to join, and their participation will increase other participants’ willingness to join, and so on. Based on this characterization, we design a cost-effective payment scheme that gradually targets a subset of clients over time to reach a desirable equilibrium in \Cref{sec:payment}.
We summarize \textbf{our contributions} below:
\begin{itemize}[leftmargin=*]
    \item We propose to model the clients' behaviors in a dynamic federated learning setting as a network effects game, in which clients' utilities depend on the number of participants. We want to find the self-fulfilling expectation equilibrium in FL, when the outcome of the clients' best-response strategy matches the shared expectation.  
    \item We characterize all self-fulfilling expectation equilibria of the federated network effects game in a mean estimation  problem and when there exists an oracle broadcasting the true utility of joining the coalition to all clients.
    \item We show that as clients continually best respond to public information, their dynamics naturally converge to an equilibrium where no clients have incentives to alter their decision.
    \item Finally, we provide a cost-efficient payment scheme that incentivizes clients to reach a desired equilibrium. Our result shows that for the dynamic to converge to a higher point, the server only needs to give payment until the coalition has reached a \emph{tipping point}.
\end{itemize}
\section{Related work}
\paragraph{Incentives in federated learning.}

Prior works that study incentive mechanisms in the context of federated learning typically aim to encourage clients to join in training in a one-time setting. Here we list the most relevant works, and defer readers to~\citep{tu2022incentive} for a more comprehensive discussion of work in the area.  Works on collaborative learning such as \citet{blum2017pac,haghtalab2022demand} describe the approach of handling heterogeneous data to learn a common concept by iteratively gathering more data. 
For the application of federated learning, \citet{modelsharing2020donahue, Donahue2021Optimality, Donahue2021modelfairness} analyzes the coalition structure for federated mean estimation and linear regression problems using hedonic game theory, and \citet{cho2022federate} proposes a new FL objective that aims at increasing the number of incentivized clients. Our method differs from the above works in that apart from utility gain, our work considers the cost of joining the federated network as a factor that can dynamically affect client behavior.

More related are the works of \citet{blum2021OneFO}, which analyzes a framework for incentive-aware data sharing in FL under the notion of envy, and  
\citet{karimireddy2022incentive}, which  introduces an accuracy-based algorithm to maximize the amount of data contributed by each client to learn their local mean parameter. While these two works share some similarities with ours in that they encode some form of joining cost, our method focuses on studying the dynamics of the population and analyzing the equilibrium when clients best respond to public information.

Finally, we note that the use of financial incentives as a means of motivating clients has been a topic of extensive research. Various economic approaches have been proposed to monetarily incentivize participation, e.g., via auctions/reversed auctions~\citep{kim2020incentive, thi2021incentive}, contract theory approaches~\cite{saputra2020federated}, public good approaches~\cite{tang2021publicgood}. Our work adds a new perspective to this area by studying how participation changes over time and considering payments only at tipping points of the dynamic. 

\paragraph{Network effects.}
Our work draws inspiration from literature on network effects games
\citep[e.g.,][]{katz1985network, shapiro2008information, easley2010networks},
where the utility of a client using a service depends on the number of other clients participating in the same service. This effect is often positive, as clients gain more utility from others joining the network. In our model, we consider accuracy improvement from having more data in a coalition as a natural incentive mechanism: As clients join the FL process, they expect to obtain a more accurate estimator from the combined data of all clients in the coalition. To the best of our knowledge, we are the first to model network effects in FL in this manner, and  our work is the first to study the behavior of rational clients in FL under the network effects model. In formulating and analyzing this dynamic perspective on client incentives, we hope that our work can lead to future study and development of FL incentive mechanisms that more closely mirror practical FL applications.
\section{Problem Formulation and Preliminaries}
In this section, we first introduce the setup of our federated learning problem. We then characterize the incentive and utility gain for each client from joining the federated learning process and their interaction with the server. Finally, we propose a general framework of federated learning as a dynamic network effects game. 

\subsection{Learning Problem Formulation}
We consider a federated learning setup where there are $M$ clients in the population who want to solve a common learning problem. For each client $i \in [M]$, we assume that they want to minimize their loss function given by $f_i(\bw) = \E_{\xi \sim \calD_i}[\ell(\bw, \xi)]$, where $\calD_i$ is the target data distribution of client $i$, and $\ell(\bw, \xi)$ is the loss function for a model $\bw$ for data sample $\xi$. There are two possibilities: all clients share a target distribution $\calD_i = \calD$, \eg global mean estimation, and when clients have different target distribution $\calD_i$, \eg personalized local mean estimation. In practice, each client only has access to its local training dataset with $n_i$ data point sampled i.i.d from data distribution $\calD_i$. 

\paragraph{Client Incentive and Utility Gain.} The goal of each client $i$ is to find a model $\bw^*_i$ that minimize its true loss function, \ie $\bw^*_i \coloneqq \arg\min_{\bw} f_i(\bw)$. We assume that each client will perform local training on their own dataset to obtain a local model $\hat{\bw}_i$. Since the local dataset has a small size, the local model $\hat{\bw}_i$ might not generalize well to the target distribution $\calD_i$. Hence, a client would consider participating in the federated learning process if the federated model gives a better generalization performance compared to its local model. We define the expected utility gain of joining a coalition $S$ for a client $i$ as the difference between the true loss using the local model $\hat{\bw}_i$ and the federated model $\bw_S$ on the target distribution $\calD_i$:
\begin{equation*}
    U_i = \E_{\calD_i}[f_i(\bw_S) - f_i(\hat{\bw_i})]
\end{equation*}

\paragraph{Bayesian priors.}
We assume each client $i$ has a Bayesian belief about a model's true loss as a function of the number of samples used in the model. 
That is, given the local sample size and the coalition size, each client $i$ can infer the losses $f_i(\hat{\bw}_i)$ and $f_i(\bw_S)$. Hence, given the shared expectation $K$, the clients have some belief about their expected utility gain for joining the coalition. When the clients' belief exactly matches the actual utility gain of joining, this belief is equivalent to the clients observing the true utility from an oracle. 

\paragraph{Personal cost.} If a client $i$ in the population decides to join the coalition, they will incur some fixed cost $c_i > 0$. This cost $c_i$ contains the enrollment and communication cost of joining the coalition. The cost may also represent the privacy loss for the client $i$ from revealing information about their local data to the server. For example, a server can charge each participating hospital a fixed fee for using the shared model. At each time step, each hospital needs to communicate with the server to update its knowledge base and decide to either opt-in or opt-out. Finally, due to the sensitive nature of medical data and how each individual model is trained, the hospitals will also incur some privacy loss for sharing their model with the server. 

\paragraph{Interaction between clients and server.}
Given a coalition $S$, a client has to decide whether to join the coalition or use only their local data. We assume that if the utility gain from joining the coalition is less than the personal costs for a client $i \in [M]$, then client $i$ would not join the coalition. That is, given a coalition $S$ and a client $i \in [M]$ with cost $c_i$ where $U_i \geq c_i$, then client $i$ would opt in the federated learning process. Otherwise, if the utility gain is less than that of the personal cost $c_i$, then client $i$ would opt out of the federated learning process.

\subsection{FL as Dynamic Network Effects Game}
We model the dynamic of the population in federated learning as a \emph{network effects} game. Whenever all clients in the population share an expectation that $K$ samples are in the coalition, each client $i \in [M]$ can calculate their expected utility gain as a function of $K$, \ie $U_i = U_i(K)$ and compare it to their cost $c_i$. After a coalition is formed, all clients in the population can observe the actual number of samples $h(K)$ in the coalition and update their shared expectations for the next time step. Formally, we describe the participation decision of the clients at each time step in \Cref{alg:dynamic}. We define the \emph{realization mapping} between the shared expectation $K$ and the actual outcome in the coalition as $h: \mathbb{Z}^{+} \rightarrow \mathbb{Z}^{+}$, \ie if the shared expectation is $K$, then the actual outcome in the coalition is $h(K)$.

\begin{figure}[ht]
    \centering
    \includegraphics[width=0.5\linewidth]{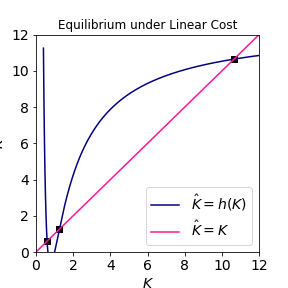}
    \caption{If the shared expectation is $K$, then $\hat{K} = h(K)$ samples will be in the coalition. When the curve $\hat{K} = h(K)$ crosses the line $\hat{K} = K$, we have self-fulfilling expectation equilibria. In our analysis, we only consider integer value equilibria.}
    \label{fig:realization-function}
\end{figure}

As a result, clients can compute their utility gain according to the new expectation and decide to either stay/join the coalition or leave. For example, if there are more samples in the coalition than expected, \ie $h(K) \geq K$ then the utility gain from joining increases. As a result, some new clients now have positive expected utility gains and are willing to join the coalition. On the other hand, if there are fewer samples in the coalition than expected, the expected utility gain would decrease. Some clients who were previously in the coalition would leave as their utility gain is lower than their cost, while other clients have no incentives to join. We aim to study the dynamic of the population and characterize a suitable equilibrium under these behaviors. 

\begin{algorithm}[tb]
   \caption{Dynamic of client participation}
   \label{alg:dynamic}
\begin{algorithmic}
   \STATE {\bfseries Input:} Clients $1, \cdots, M$ with personal cost $c_1, \cdots, c_M$ and number of local samples $n_1, \cdots, n_M$, respectively; coalition $S$.
   \STATE At $t=0$, all clients form a shared expectation $K$ on how many samples will be in the coalition $S$.
   \FOR{time step $t = 1, \cdots, T$}
   \FOR{client $i=1,\cdots,M$}
   \STATE Client compute expected utility gain $U_i(K)$ based on shared expectation $K$.
   \IF{$U_i(K) \geq c_i$}
        \STATE Client $i$ will join coalition $S$
   \ELSE
        \STATE Client $i$ will leave coalition $S$
   \ENDIF
   \ENDFOR
   \STATE Update $K = \sum_{j \in S} n_j$ to be the new shared expectation for the next time step. 
   \ENDFOR
\end{algorithmic}
\end{algorithm}

If the clients' shared expectation is perfect, we can define an equilibrium notion. If all clients form a shared expectation that there are $K$ samples in the coalition, and if each of them decides to either join or leave the coalition based on this expectation, then the actual number of samples in the coalition is $K$. We call this a \emph{self-fulfilling expectation} for the number of samples $K$: 
\begin{definition}[Self-Fulfilling Expectation Equilibrium] Consider a population where all clients share an expectation that there are $K$ samples in the coalition. If the actual number of samples in the coalition is $h(K) = K$, then $K$ is called a self-fulfilling expectation equilibrium.  
\end{definition}

By its definition, if we know function $h(K)$, we can find the self-fulfilling equilibria by locating the integral crossing points of $h(K)$ and the identity function $f(K) = K$. But the question is: do they always cross at integer points? We show by the following theorem that whenever we have $h(K)\ge K$, we are guaranteed to find an integral equilibrium $h(K^*) = K^*$ to the right of $K$.

As we will show in our analysis, we can identify self-fulfilling expectation equilibria in the dynamic as long as the realization mapping $h(\cdot)$ is monotonically non-decreasing for some stylized learning problems. 
\begin{theorem}[Existence of Self-Fulfilling Expectation Equilibrium] In FL under the dynamic network effects game model, if the realization mapping $h(.)$ is monotonically non-decreasing, for any positive $K$ with $h(K) \geq K$, we can find a self-fulfilling expectation equilibrium $h(K^*) = K^*$ to the right of $K$, \ie $K^* \geq K$. For any point in between $K' \in (K, K^*)$, we have $h(K') > K'$.  
\label{thm:convergence}
\end{theorem}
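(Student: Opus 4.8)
The plan is to construct the equilibrium explicitly by iterating the realization mapping starting from $K$, and to read off both claims from the monotonicity of $h$. Concretely, I would define the sequence $K_0 = K$ and $K_{t+1} = h(K_t)$ for $t \geq 0$. The hypothesis $h(K) \geq K$ gives $K_1 = h(K_0) \geq K_0$, and monotonicity propagates this upward by induction: if $K_t \geq K_{t-1}$, then applying $h$ yields $K_{t+1} = h(K_t) \geq h(K_{t-1}) = K_t$. Hence $(K_t)_t$ is a non-decreasing sequence of positive integers.

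The point that actually makes the theorem true is that this sequence is bounded above. In the federated setting the coalition is always a subset of the $M$ clients, so the realized sample count never exceeds the total $N \coloneqq \sum_{i=1}^M n_i$; that is, $h(K') \leq N$ for every $K'$, and in particular $K_t \leq N$ for all $t$. A non-decreasing sequence of integers bounded above is eventually constant, so there is a first time $T$ with $K_{T+1} = K_T$. Setting $K^* \coloneqq K_T$ gives $h(K^*) = K_{T+1} = K^* \geq K_0 = K$, which is precisely a self-fulfilling expectation equilibrium to the right of $K$. I would emphasize that this boundedness is essential rather than cosmetic: without it the map $h(K') = K'+1$ satisfies every stated hypothesis yet has no fixed point, so any correct argument must invoke the finiteness of the sample pool.

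It then remains to show that $h(K') > K'$ strictly for every integer $K' \in (K, K^*)$. By the choice of $T$ as the first stabilization time, the increases before $T$ are strict, so the distinct iterate values form a chain $K_0 < K_1 < \cdots < K_T = K^*$, and the half-open intervals $[K_t, K_{t+1})$ for $0 \leq t \leq T-1$ partition $[K, K^*)$. Any integer $K' \in (K, K^*)$ therefore lies in some $[K_t, K_{t+1})$. Monotonicity gives $h(K') \geq h(K_t) = K_{t+1}$, while membership in the interval gives $K' < K_{t+1}$; chaining these yields $h(K') \geq K_{t+1} > K'$, as claimed. Most of this is mechanical once the iteration is set up; the only genuine care is in this last step, ensuring that the integers skipped by the iteration when $h$ jumps are still controlled, which is exactly where monotonicity does the work.
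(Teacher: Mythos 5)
Your proof is correct and follows essentially the same route as the paper's: both iterate the realization mapping starting from $K$, use monotonicity to show the iterates are non-decreasing, and use finiteness (the paper bounds the iterates by the number of clients $M$, you by the total sample count $\sum_i n_i$) to conclude that the sequence stabilizes at a fixed point $K^*$. If anything, your write-up is more complete: you explicitly prove the second claim that $h(K') > K'$ for every integer $K' \in (K, K^*)$ via the decomposition of $[K, K^*)$ into the intervals $[K_t, K_{t+1})$, whereas the paper's proof establishes only the existence of $K^*$ and leaves that part implicit.
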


Characterizing self-fulfilling expectation equilibria is crucial to the server and policymakers who wish to understand and influence where the dynamic converges. In \Cref{sec:homogenenous-setting} and \ref{sec:utility-oracle}, we show that \Cref{thm:convergence} is true for two stylized settings of mean estimation and utility oracle. 
In \Cref{sec:payment}, we show that the server use properties of the equilibria to guarantee that the final coalition reaches the maximum size equilibrium by paying a subset of clients. 

\paragraph{Stable equilibrium.}
When the population is at equilibrium, it could still be susceptible to small perturbations when a client decides to join or leave the coalition. Consider an extreme instance where the highest cost of joining for a client is the same as the utility gain at every value $K$, i.e. $\forall K \in [M]: c_K = U_K$. Then, by definition, every value $K$ is a self-fulfilling expectation equilibrium. Hence, at each value $K \in [M-1]$, new clients still have incentives to join the coalition as their cost of joining is exactly the same as the utility gain. In this instance, subject to small perturbation, the dynamic will converge to either $0$, where no client is in the coalition, or $M$, where every client is in the coalition. These equilibrium values where no client has incentives to join or leave are called stable equilibrium. Formally, we define stable equilibrium as:
\begin{definition}[Stable Equilibrium \citep{jackson2007network}] 
\label{def:stable-equilibrium}
An equilibrium $K^*$ is stable if there exists $\epsilon' > 0$ such that $h(K^* - \epsilon) > K^* - \epsilon$ and $h(K^* + \epsilon) < K^* + \epsilon$ for all $\epsilon' > \epsilon > 0$. An equilibrium $K$ is unstable (tipping point) if there exists $\epsilon' > 0$ such that $h(K-\epsilon) < K-\epsilon$ and $h(K+\epsilon) > K+\epsilon$ for all $\epsilon' > \epsilon > 0$.
\end{definition}
In the example above, all values $K \in [1, K-1]$ are the tipping points. With this definition, we can analyze the dynamics of the population. Informally, if the initial expectation is not a stable equilibrium, then the clients can continually update their shared expectations and decide to either join or leave the coalition. This decision-making process ends when the shared expectation is at a stable equilibrium, where with a small perturbation to the coalition, there is no incentive for any client to change their decision.
\begin{theorem}[Convergence to stable equilibrium]
In FL under the dynamic network effects game model, if clients share an expectation $K$ with $h(K) > K$ then the coalition size will grow in the next time step. Otherwise, if $h(K) < K$, then the coalition size will shrink in the next time step. If the shared expectation is between a tipping point $K'$ and a stable equilibrium $K^*$, then the dynamic will converge to the stable equilibrium $K^*$. 
\label{thm:convergence-stable}
\end{theorem}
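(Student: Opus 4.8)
The plan is to recognize that \Cref{alg:dynamic} simply iterates the realization mapping: if the shared expectation at the start of a time step is $K$, then the coalition that forms has $h(K)$ samples, and this value becomes the expectation for the next step. Writing $K_{t+1} = h(K_t)$ for the sequence of coalition sizes, the first two claims are immediate from the update rule. If $h(K) > K$ then $K_{t+1} = h(K_t) = h(K) > K = K_t$, so the coalition grows; if $h(K) < K$ then $K_{t+1} < K_t$ and it shrinks. The substance of the theorem is the third claim, which I would prove as a monotone-convergence argument for the iteration $K_{t+1} = h(K_t)$, using the monotonicity of $h$ assumed throughout (as in \Cref{thm:convergence}).

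First I would establish the structural fact that $h(K) > K$ for every $K \in (K', K^*)$. By \Cref{def:stable-equilibrium}, just to the right of the tipping point $K'$ we have $h(K'+\epsilon) > K'+\epsilon$, and just to the left of the stable equilibrium $K^*$ we have $h(K^*-\epsilon) > K^*-\epsilon$; moreover, $K'$ and $K^*$ being the adjacent crossing points that the hypothesis picks out, there is no equilibrium strictly between them, so $h(K) \neq K$ on the open interval. Since $h$ is monotonically non-decreasing, the sign of $h(K) - K$ cannot flip on $(K', K^*)$ without producing an intermediate crossing; matching the positive sign inherited from both endpoints, I conclude $h(K) > K$ throughout $(K', K^*)$.

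Then I would run the monotone-convergence argument. Take any initial $K_0 \in (K', K^*)$. As long as $K_t \in (K', K^*)$, the structural fact gives $K_{t+1} = h(K_t) > K_t$, so the iterates strictly increase; and since $h$ is non-decreasing with $h(K^*) = K^*$, the bound $K_t < K^*$ yields $K_{t+1} = h(K_t) \le h(K^*) = K^*$, so the iterates never overshoot $K^*$. Because the $K_t$ take integer values, a strictly increasing sequence bounded above by $K^*$ must reach $K^*$ after at most $K^* - K_0$ steps; once there, $h(K^*) = K^*$ keeps it fixed. Hence the dynamic converges, in finitely many steps, to the stable equilibrium $K^*$.

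The main obstacle is the structural fact of the middle step: ruling out any crossing of $h$ with the identity strictly between $K'$ and $K^*$. The care needed is that \Cref{def:stable-equilibrium} is phrased with a continuous $\epsilon$ while $h$ has integer domain and range. The clean resolution is a discrete intermediate-value argument: writing $g(K) = h(K) - K$, monotonicity of $h$ gives $g(K+1) \ge g(K) - 1$, so the integer-valued $g$ cannot jump from positive to negative without taking the value $0$; hence a sign change between $K'$ and $K^*$ would produce an equilibrium in the open interval, contradicting their adjacency and forcing $g > 0$ throughout. With this handle the rest of the argument is routine.
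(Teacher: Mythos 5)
Your argument is correct, but it proceeds along a genuinely different route from the paper's. The paper never proves \Cref{thm:convergence-stable} at this level of abstraction: its content is only established through the setting-specific proofs in the appendix (the proofs of \Cref{cor:convergence-homogeneous} and \Cref{cor:convergence-heterogeneous}), which run a client-by-client cascade --- starting from a perturbed tipping point, client $K+1$ joins because $c(K+1) < u(K,n) < u(K+1,n)$, which raises the shared expectation so client $K+2$ reconsiders, and so on (with a mirrored ``downward spiral'' when a client leaves), until an equilibrium with the right sign conditions is reached. That argument is tied to the mean-estimation structure (a common utility $u(K,n)$ and clients ordered by a non-decreasing cost $c(\cdot)$) and is informal about why the cascade terminates exactly at $K^*$. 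You instead work directly with the realization map: a sign lemma ($h(K) > K$ on the open interval) obtained from the discrete intermediate-value inequality $g(K+1) \ge g(K) - 1$ for $g = h - \mathrm{id}$, followed by a monotone, bounded, integer-valued iteration $K_{t+1} = h(K_t)$ that must hit $K^*$ within $K^* - K_0$ steps. This matches the generality at which the theorem is actually stated (only monotonicity of $h$ is used), explicitly repairs the mismatch between the continuous-$\epsilon$ phrasing of \Cref{def:stable-equilibrium} and the integer domain of the dynamic --- a point the paper glosses over --- and yields an explicit finite convergence time. What the paper's cascade buys in exchange is economic interpretability: it identifies which client acts at each step, which is what the payment scheme in \Cref{sec:payment} later exploits.

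Two small caveats. First, you only treat the configuration $K' < K_0 < K^*$; the theorem also covers an expectation sandwiched the other way, $K^* < K < K'$, where $h(K) < K$ throughout and the iterates decrease to $K^*$. Your argument applies verbatim by symmetry (bounded below by $h(K^*) = K^*$ via monotonicity), but this case should be stated, since the paper's own cascade proof handles both directions. Second, your adjacency reading --- no equilibrium strictly between $K'$ and $K^*$ --- is indeed necessary for the statement to be true at all, since any intermediate fixed point of $h$ would absorb the dynamic; the paper leaves this hypothesis implicit as well, so flagging it is a strengthening rather than a deviation.
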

In our analysis, we assume that there always exists one trivial self-fulfilling expectation at $0$, \ie if all clients expect that no one would join then there are no samples in the coalition and $U_i(0) = 0$. Since the cost of joining for a client $i$ is $c_i > 0$, no client would join the coalition. In our analysis, we always assume the initial expectation is positive unless specified otherwise. We write equilibrium as shorthand for self-fulfilling expectation equilibrium.

\begin{remark}[Stable equilibrium is a Nash equilibrium]
First, we observe that not every self-fulfilling equilibrium is a Nash equilibrium. Consider the example where the cost of joining is equal to the utility gain at every value of $K$. Then at some tipping point $K$, given the decision of every other client, a client $K+1$ who is not in the coalition can potentially join and increase their utility. Hence, the tipping point $K$ is not a Nash equilibrium.  

On the other hand, a stable equilibrium $K^*$ is a Nash equilibrium. By definition, when the shared expectation is $K^*$, no client in the population has incentives to change their decision. Hence, the set of stable equilibrium is an intersection of Nash equilibria and self-fulfilling equilibria. 
\end{remark}

\section{Network Effects game in a Mean Estimation Problem}
\label{sec:homogenenous-setting}

In this section, we investigate how the dynamic behaves in a stylized global mean estimation problem. First, we formally describe the setup when there all clients in the population have the same number of local samples. Then, we show that \Cref{thm:convergence} is true in this setting with some simple cost functions. Finally, we show that the dynamic naturally converges to a stable equilibrium where no clients have further incentive to change their decisions.
\paragraph{Mean Estimation Problem Setup.}
There are $M$ clients in the population who want to solve a common mean estimation problem. Each client $i$ has a fixed number of samples $n_i$ and a cost $c_i$. First, each client $i$ draws their mean parameter $\mu_i$ i.i.d from a common prior with mean $\theta$: $\mu_i \sim \calN(\theta, \sigma_{\theta}^2)$. W.L.O.G, we assume that client $i$ draws their samples i.i.d from a unit variant Gaussian with local mean $\mu_i$: $X_i \sim \calN(\mu_i, 1)$. Formally, the clients want to find an estimator $w$ such that the mean squared error (MSE) $\E[(\theta - w)^2]$ is minimized. We first focus our analysis on the case where all clients in the population have the same number of local data to highlight the dynamic behavior. We defer the discussion of the general case where clients have different numbers of local samples to the appendix.

For a client $i$, if they do not join the coalition, their estimator is simply the empirical average over their local data samples $\bw_i = \nicefrac{1}{n_i} \sum_{j=1}^{n_i} x_{i,j}$. Otherwise, if the client decides to join a coalition $S$, we assume they will send their local estimator $w_i$ to the server for aggregation. The server then computes a coalition estimator, which we assume is a weighted average over the local estimators of participating clients in $S$: $\bw_S = \nicefrac{1}{N_S} \sum_{j \in S} w_j \cdot n_j$ where $N_S$ is the total number of samples in the coalition.

Since all clients have the same number of local samples $n_i = n$ and the expected number of clients in the coalition is $K$, the utility gain function for a client $i$ is:
\begin{equation}
    U_i = U(K,n) = \frac{K-1}{Kn} + \frac{3K^2 - 5K + 2}{K^2}  \sigma^2_{\theta}
\label{eq:homogeneous-utility-gain}
\end{equation}
When all clients have the same number of local samples, we slightly abuse the notation and write $K$ as the expected number of clients in the coalition instead of the expected number of samples. Similarly, $h(K)$ represents the actual number of clients in the coalition in this setting.

Since all clients have the same number of local samples, the equilibrium only needs to consider the number of clients in the coalition. Our first result here is a sufficient condition for such an equilibrium to exist. Next, we explore some desirable properties of self-fulfilling expectation equilibrium, \ie characterizing a subset of equilibrium that is stable where no clients in the population have incentives to change their decision. We formally show that under the network effect model, all dynamics will converge to a stable self-fulfilling expectation equilibrium. Finally, we provide sufficient conditions for such a stable equilibrium to exist. 


\subsection{Existence of a Self-fulfilling Expectation Equilibrium} 
\paragraph{Realization mapping $h(\cdot)$.}
Since all clients have the same number of local samples, W.L.O.G, we can order them according to their personal cost in ascending order. Moreover, we assume that the cost of joining for a client $i$ is a continuous positive monotonically non-decreasing function of their index $i$, \ie $c_i = c(i)$. In our analysis, we only look at the cost function for integer value $i$. If all clients share a common belief that $K$ clients will join the coalition, then client $i$ would join only if $U(K,n) \geq c(i)$. Hence, if any client would join at all, then the set of clients joining would be between $1$ and $\hat{K}$, where 
$$h(K) = \inf\{ x \in \mathbb{R}: U(K,n) \geq c(i)\}$$
Hence, we can find the equilibrium by solving $U(K,n) = c(h(K))$.
Fixing the number of local samples $n$, this equation is equivalent to solving 
$$h(K) = c^{-1}(U(K,n))$$
Formally, we define the mapping between the expectation $K$ and the actual number of clients in the coalition $\hat{K}$ as $h: [M] \rightarrow [M]$, where 
$$h(x) = c^{-1}(U(x))$$
Hence, if we exactly know the cost function $c(\cdot)$ and the expected utility gain $U(K,n)$, we can find an equilibrium in the population. 
Since $c(i)$ is monotone non-decreasing, its inverse function $h(\cdot)$ is also monotone non-decreasing. 

Moreover, we prove that there always exists an equilibrium under some mild assumption on shared expectation $K$ and the actual number of clients in the coalition $h(K)$. Informally, if the shared expectation $K$ is pessimistic compared to the actual outcome $h(K)$, then we can always find an equilibrium. The clients would update their expectations and join the coalition due to a new higher utility gain. However, as the $h(\cdot)$ function is non-decreasing, there exist some clients whose utility gain is less than the cost of joining and they do not have incentives to join the coalition. 
\begin{corollary}
In the mean estimation problem where all clients have the same number of local samples, for any positive $K>0$ with $h(K)\ge K$, we can find a self-fulfilling expectation equilibrium $h(K^*) = K^*$ to the right of $K$ with $K^*\ge K$; and for any point in between $K'\in (K, K^*)$, we have $h(K')>K'$. 
\label{thm:sufficient-condition-homogeneous}
\end{corollary}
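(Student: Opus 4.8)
The plan is to recognize this corollary as the specialization of \Cref{thm:convergence} to the mean estimation problem, so the whole task collapses to verifying the single hypothesis of that theorem: that the realization mapping $h(\cdot)$ is monotonically non-decreasing on the relevant domain. Once that is established, the existence of an integral equilibrium $K^* \geq K$, together with the strict inequality $h(K') > K'$ for every $K' \in (K, K^*)$, transfers verbatim from \Cref{thm:convergence} with no further argument.

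First I would recall the closed form $h(K) = c^{-1}(U(K,n))$ and observe that $h$ is the composition of the map $K \mapsto U(K,n)$ with the inverse cost $c^{-1}$. Since a composition of monotone non-decreasing functions is itself monotone non-decreasing, it suffices to check each factor separately. The factor $c^{-1}$ is non-decreasing immediately from the standing assumption that $c(i)$ is continuous, positive, and monotonically non-decreasing in the index $i$.

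The substantive step is showing that $K \mapsto U(K,n)$ from \Cref{eq:homogeneous-utility-gain} is non-decreasing. Treating $K$ as a real variable, I would split $U$ into the two summands $\frac{K-1}{Kn} = \frac{1}{n} - \frac{1}{nK}$ and $\sigma_\theta^2\bigl(3 - \frac{5}{K} + \frac{2}{K^2}\bigr)$ and differentiate. The first has derivative $\frac{1}{nK^2} > 0$; the second has derivative $\sigma_\theta^2 \cdot \frac{5K-4}{K^3}$. Summing shows $U'(K) > 0$ wherever both terms are positive, so $U(\cdot,n)$ is increasing there, and hence $h = c^{-1} \circ U$ is monotone non-decreasing, which is exactly what \Cref{thm:convergence} requires.

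I expect the only real subtlety, and thus the main obstacle such as it is, to be pinning down the domain on which monotonicity holds. The second summand $3 - \frac{5}{K} + \frac{2}{K^2}$ is \emph{not} monotone for all $K > 0$: its derivative $\frac{5K-4}{K^3}$ vanishes at $K = \frac{4}{5}$, so the term first decreases and then increases. I would therefore restrict attention to $K \geq 1$, the only physically meaningful regime (at least one participant, with $U(1,n) = 0$) and the range on which $h : [M] \to [M]$ is defined; there $5K - 4 \geq 1 > 0$, both derivatives are positive, and monotonicity is secured. Invoking \Cref{thm:convergence} then completes the proof.
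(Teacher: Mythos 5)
Your proposal is correct in its computations and is a legitimate derivation of the corollary \emph{from} \Cref{thm:convergence}, but it takes a genuinely different route from the paper. The paper does not invoke \Cref{thm:convergence} at all: its appendix proof is a self-contained fixed-point iteration — set $K'=K$, repeatedly replace $K'$ by $h(K')$, note that $K'$ strictly increases as long as $h(h(K'))>h(K')$ and is bounded by $M$, and conclude via monotonicity of $h$ that when the iteration stops, $h(h(K'))=h(K')$, so $K^*=h(K')$ is an equilibrium. This matters for the paper's logical architecture: \Cref{thm:convergence} is never given a standalone proof anywhere; the paper explicitly says the corollaries in the mean-estimation and utility-oracle sections are what ``show'' that theorem in the stylized settings, so the iteration argument inside this corollary's proof is effectively the proof of the general theorem. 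Your reduction inverts that flow, and if your text replaced the paper's proof, \Cref{thm:convergence} would be left unsupported — a circularity you should at least flag, or repair by reproducing the iteration argument (it needs nothing beyond monotonicity and finiteness of $[M]$). On the other hand, your proof supplies something the paper's proof silently omits: an honest verification of the monotonicity hypothesis. The paper asserts that $h(\cdot)=c^{-1}(U(\cdot,n))$ is non-decreasing ``since $c(i)$ is monotone non-decreasing,'' ignoring that this also requires $U(\cdot,n)$ from \Cref{eq:homogeneous-utility-gain} to be non-decreasing; your derivative computation — $\frac{1}{nK^2}$ plus $\sigma_\theta^2\,\frac{5K-4}{K^3}$, both positive for $K\ge 1$, together with the observation that monotonicity genuinely fails on $(0,\nicefrac{4}{5})$ so the domain restriction to $K\ge 1$ is not vacuous — closes exactly that gap. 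The strongest write-up would combine the two: your hypothesis verification followed by the paper's iteration argument (or an independent proof of \Cref{thm:convergence}).
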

This result can also be interpreted as every time the actual number of clients in the coalition $\hat{K}$ equals or exceeds the expectation $K$, we must have an equilibrium. Under this assumption, we can derive a sufficient condition on the cost function $c(\cdot)$ so there exists an equilibrium. Note that $h(K)$ is a composition of the cost function $c(i)$ and the utility function $U(K,n)$. Since the cost function $c(\cdot)$ is monotonically non-decreasing, we can apply $c(\cdot)$ to both sides of the sufficient condition in \Cref{thm:sufficient-condition-homogeneous} and obtain an equivalent condition: an equilibrium exists as long as there exists an expectation $K$ such that $U(K,n) \geq c(K)$.

\subsection{Convergence to Stable Equilibrium}

We derive a sufficient condition for a population to have a stable equilibrium. 
Informally, a stable equilibrium exists when all clients are in the coalition or if the cost of joining for client $K^*+1$ is larger than the utility gain from joining.
\begin{corollary}
In the mean estimation problem when all clients have the same number of local samples, a stable equilibrium $h(K^*) = K^*$ exists as long as there exists an equilibrium $K^*$ with $h(K^*+1) < K^*+1$ or $K^* = M$.
\label{thm:sufficient-condition-stable-homogeneous}
\end{corollary}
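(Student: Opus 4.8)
The plan is to verify directly the two defining inequalities of stability from \Cref{def:stable-equilibrium} at the given equilibrium $K^*$: I must produce an $\epsilon'>0$ such that, for every $0<\epsilon<\epsilon'$, both the left condition $h(K^*-\epsilon)>K^*-\epsilon$ and the right condition $h(K^*+\epsilon)<K^*+\epsilon$ hold. The two structural facts I would lean on throughout are that $h=c^{-1}\circ U$ is continuous and monotonically non-decreasing (as already established), and that the utility in \Cref{eq:homogeneous-utility-gain} is in fact \emph{strictly} increasing in $K$: differentiating term by term gives $U'(K)=\frac{1}{nK^2}+\sigma_\theta^2\,\frac{5K-4}{K^3}>0$ for every $K\ge 1$. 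Monotonicity lets me control $h$ on whole intervals from its values at integers, and strict monotonicity of $U$ lets me pass freely between the integer-level inequality $h(j)\lessgtr j$ and the equivalent statement $U(j)\lessgtr c(j)$ by applying the non-decreasing map $c$ to both sides.

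First I would dispose of the right condition. In the boundary case $K^*=M$ it is immediate: the coalition contains at most $M$ clients, so $h$ is capped at $M$ and $h(K^*+\epsilon)\le M<K^*+\epsilon$ holds vacuously (there is no client of index beyond $M$ to join), leaving only the left condition with content. In the case $h(K^*+1)<K^*+1$, I would argue that $h$ lies strictly below the identity on a whole right-neighborhood of $K^*$ using the contrapositive of \Cref{thm:sufficient-condition-homogeneous}: if instead $h(K^*+\epsilon)\ge K^*+\epsilon$ for arbitrarily small $\epsilon>0$, the existence corollary applied at the starting point $K^*+\epsilon$ would yield an equilibrium $\tilde K\ge K^*+\epsilon$ with $h(K')>K'$ on the whole interval $(K^*+\epsilon,\tilde K)$; since $h(K^*+1)<K^*+1$ forces $\tilde K\in(K^*,K^*+1)$, I would combine this with continuity and monotonicity of $h-\mathrm{id}$ to contradict the strict negative sign at $K^*+1$. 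Hence $h(K^*+\epsilon)<K^*+\epsilon$ on some right-neighborhood.

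Next I would establish the left condition, which is the part not addressed by the informal statement and which I expect to be the crux. Let $K_{\mathrm{prev}}$ be the largest equilibrium strictly below $K^*$, taking $K_{\mathrm{prev}}=0$ (the trivial equilibrium) if none exists. By maximality there is no equilibrium in the open interval $(K_{\mathrm{prev}},K^*)$, so by continuity $h-\mathrm{id}$ has a constant sign there. I would then pin this sign to be positive by exhibiting a point $K_0\in(K_{\mathrm{prev}},K^*)$ with $h(K_0)\ge K_0$ and invoking \Cref{thm:sufficient-condition-homogeneous}: the equilibrium it produces to the right of $K_0$ must be $K^*$ itself, since no equilibrium lies in between, and the corollary guarantees $h(K')>K'$ for all $K'\in(K_0,K^*)$. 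In particular $h(K^*-\epsilon)>K^*-\epsilon$ for all sufficiently small $\epsilon>0$, giving the left condition; choosing $\epsilon'$ as the smaller of the two neighborhood radii then secures both inequalities at once and certifies that $K^*$ is stable.

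The main obstacle, and the step I would treat most carefully, is showing that the crossing of $h$ and the identity at $K^*$ is genuinely a downward (transversal) crossing rather than a tangential touch from below, and relatedly that $h-\mathrm{id}$ does not return to zero or change sign inside the immediate neighborhoods $(K^*,K^*+\epsilon')$ and $(K_{\mathrm{prev}},K^*)$. A tangency from below on the left, for instance, would make $K^*$ unstable despite satisfying $h(K^*+1)<K^*+1$, so the left-side argument genuinely requires that $K^*$ be an equilibrium \emph{reached from below} — precisely the type produced by \Cref{thm:sufficient-condition-homogeneous}. This is where I must combine the continuity and monotonicity of $h$ with the integer-level hypothesis $h(K^*+1)<K^*+1$ (or the boundary case $K^*=M$) and the existence guarantee; the strict monotonicity of $U$ verified above is what rules out the degenerate flat/tangent configurations for the specific utility of \Cref{eq:homogeneous-utility-gain} and lets me promote the integer-level inequalities to the continuous neighborhoods demanded by \Cref{def:stable-equilibrium}.
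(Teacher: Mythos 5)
Your strategy of directly verifying the two inequalities in \Cref{def:stable-equilibrium} is sound, and your handling of the case $K^*=M$ is fine, but the left-hand condition---which you yourself identify as the crux---is where the proof breaks down, and it breaks down irreparably as written. Your plan is to fix the sign of $h-\mathrm{id}$ on $(K_{\mathrm{prev}},K^*)$ ``by exhibiting a point $K_0\in(K_{\mathrm{prev}},K^*)$ with $h(K_0)\ge K_0$,'' but you never produce such a $K_0$, and none can be produced from the corollary's hypothesis: knowing that $K^*$ is an equilibrium with $h(K^*+1)<K^*+1$ says nothing about $h$ to the \emph{left} of $K^*$, and the existence of such a $K_0$ is exactly equivalent to the left stability condition you are trying to prove---the argument is circular. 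Moreover, your proposed patch (that strict monotonicity of $U$, whose derivative computation is indeed correct, rules out tangent/flat configurations) is false: $h=c^{-1}\circ U$, so degeneracy is a property of the interplay between $c$ and $U$, not of $U$ alone. Concretely, take $c(i)=U(i,n)$ for $2\le i\le K^*$ (these values are positive and increasing) and $c(i)>U(i,n)$ for $i>K^*$, interpolating continuously; then $U$ is strictly increasing yet $h$ coincides with the identity on $[2,K^*]$, so $K^*$ is an equilibrium satisfying $h(K^*+1)<K^*+1$ at which $h(K^*-\epsilon)=K^*-\epsilon$, i.e., the left condition fails and $K^*$ is not stable. (This is just the paper's own ``equilibrium at every point'' example truncated at $K^*$.) What the left condition genuinely requires is that $K^*$ be an equilibrium \emph{reached from below}, with $h>\mathrm{id}$ on an interval just under $K^*$---precisely what \Cref{thm:convergence} and \Cref{thm:sufficient-condition-homogeneous} guarantee for the equilibria they construct, and what the paper's own one-line proof (``follow the same analysis'' as the existence proof, then apply the definition) implicitly leans on. Your proof neither supplies this property nor can derive it from the stated hypothesis.

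Two further issues are repairable but worth flagging. First, your right-hand argument invokes ``monotonicity of $h-\mathrm{id}$,'' which does not hold ($h$ non-decreasing does not make $h-\mathrm{id}$ monotone), and the contradiction you aim for never materializes: an equilibrium $\tilde K\in(K^*,K^*+1)$ is perfectly consistent with $h(K^*+1)<K^*+1$, since $h$ can cross back below the identity between $\tilde K$ and $K^*+1$. The correct argument is one line and needs no continuity: $h$ is integer-valued (it counts clients) and non-decreasing, so $h(K^*+1)<K^*+1$ forces $h(K^*+1)\le K^*$, whence for all $0<\epsilon<1$ we have $h(K^*+\epsilon)\le h(K^*+1)\le K^*<K^*+\epsilon$. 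Second, you repeatedly rely on continuity of $h$ ``as already established''; the paper establishes no such thing, and under the natural extension of $h$ to non-integer arguments (the number of clients $i$ with $c(i)\le U(x,n)$), $h$ is a step function, hence discontinuous. Consequently all steps appealing to intermediate-value or constant-sign arguments for $h-\mathrm{id}$ on real intervals must be recast on the integer lattice, where---as the example above shows---the left condition genuinely cannot be recovered from the hypothesis as stated.
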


Finally, we show that \Cref{thm:convergence-stable} applies and the dynamic would naturally converge to the stable equilibrium. 
\begin{corollary}[Convergence to stable equilibrium in mean estimation setting]
In the mean estimation problem when all clients have the same number of local samples, if an expectation $K$ is between a tipping point $K'$ and a stable equilibrium $K^*$, then the dynamic will converge to $K^*$. 
\label{cor:convergence-homogeneous}
\end{corollary}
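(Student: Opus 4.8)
The plan is to derive this corollary directly from \Cref{thm:convergence-stable}, whose conclusion is exactly the desired convergence statement, by verifying that its single structural hypothesis---that the realization mapping $h(\cdot)$ is monotonically non-decreasing---holds in the mean estimation setting. This has in fact already been observed in the construction of $h$ for this section: we have $h(x) = c^{-1}(U(x))$, and since the cost function $c(\cdot)$ is assumed continuous and monotonically non-decreasing, its inverse is continuous and monotonically non-decreasing, so $h$ inherits both properties. Once this is in hand, \Cref{thm:convergence-stable} applies verbatim and the corollary follows. The remaining work is to make the dynamics argument inside \Cref{thm:convergence-stable} fully explicit in this concrete setting.

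To do so, I would model the dynamics of \Cref{alg:dynamic} as the iteration $K_{t+1} = h(K_t)$ and suppose $K_0 = K \in (K', K^*)$ with $K' < K^*$, $K'$ a tipping point and $K^*$ the next stable equilibrium. First I would invoke \Cref{thm:sufficient-condition-homogeneous} to conclude that $h(x) > x$ for every $x \in (K', K^*)$, so the orbit moves strictly to the right whenever it lies in this interval. Next I would use monotonicity of $h$ together with $h(K^*) = K^*$ to show the interval is forward-invariant: if $K_t < K^*$ then $h(K_t) \le h(K^*) = K^*$, and combined with $h(K_t) > K_t$ this gives $K_t < K_{t+1} \le K^*$. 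Hence $\{K_t\}$ is strictly increasing and bounded above by $K^*$, so by monotone convergence it has a limit $L \le K^*$. By continuity of $h$, the limit satisfies $L = h(L)$, \ie $L$ is an equilibrium; since $K^*$ is the only equilibrium in $(K', K^*]$, we must have $L = K^*$.

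The symmetric case, where $K^* < K'$ with $K^*$ stable and $K'$ the adjacent tipping point, is handled identically after flipping the inequalities: on $(K^*, K')$ the definitions of stable equilibrium and tipping point force $h(x) < x$, so the orbit is strictly decreasing and bounded below by $K^*$, and the same fixed-point argument pins the limit to $K^*$. I expect the main obstacle to be bookkeeping rather than conceptual: one must be careful that no additional equilibrium lies strictly between $K'$ and $K^*$ (otherwise the monotone orbit could stall there), which is precisely what \Cref{thm:sufficient-condition-homogeneous} rules out, and one must confirm that the integer-valued nature of the coalition does not create a fixed point interior to the interval---so reading $h$ as the continuous map $c^{-1}\circ U$ and only restricting to integers at the equilibria themselves, as the paper does, is what keeps the argument clean.
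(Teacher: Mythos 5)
Your proposal is correct, but it takes a genuinely different route from the paper's own proof. The paper argues client by client: it starts from an equilibrium, perturbs it by a single client joining (or leaving), and uses monotonicity of $c(\cdot)$ and $U(\cdot,n)$ to show that the next-indexed clients then join (or leave) one at a time, the cascade terminating exactly at the first integer $K''$ with $c(K'')\le U(K'',n)$ and $c(K''+1)>U(K''+1,n)$, i.e.\ the adjacent equilibrium; it never invokes \Cref{thm:sufficient-condition-homogeneous}, continuity, or any limit argument, since an integer-valued monotone cascade terminates in finitely many steps. You instead run the abstract fixed-point iteration $K_{t+1}=h(K_t)$ for the continuous monotone map $h=c^{-1}\circ U$, use \Cref{thm:sufficient-condition-homogeneous} as the key lemma excluding equilibria inside $(K',K^*)$, and finish by monotone convergence plus continuity. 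Your route matches the corollary's literal hypothesis more faithfully (an arbitrary initial expectation anywhere in $(K',K^*)$, whereas the paper's proof only analyzes perturbations of an equilibrium) and isolates cleanly why the orbit cannot stall short of $K^*$. The paper's route, by staying on integers, avoids the one soft spot in yours: the coalition size realized by \Cref{alg:dynamic} is the integer $\lfloor h(K)\rfloor$ (the largest index $i$ with $c(i)\le U(K,n)$), and this floored map can have integer fixed points at which the continuous $h$ still satisfies $h(K)>K$, so the actual dynamic can halt where your relaxed orbit would not; the paper's client-level stopping condition is precisely the fixed-point condition of the floored map, so it is immune to this. The natural repair of your argument for the integer dynamic---a strictly increasing integer orbit bounded by $K^*$ terminates in finitely many steps, at a point satisfying the floored fixed-point condition---essentially reproduces the paper's cascade. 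Finally, deriving the conclusion from \Cref{thm:convergence-stable} is mildly circular in the paper's logical structure, since that theorem is never proved independently and this corollary is one of its instantiations; your proposal is self-contained enough that nothing rests on the citation, but the framing should be reversed.
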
 

\subsection{Extension: when clients have different numbers of local samples}
In the previous analysis, we studied a stylized setting where all clients have the same number of local samples. 
When clients have a different number of samples, this analysis does not easily generalize since we cannot directly infer the utility gain from the expected number of samples $K$. Notably, in \Cref{eq:utility-gain}, the cross term $\sum_{i\neq j}^{j\in S} n_j^2$ requires the clients to have knowledge of the coalition composition to calculate the expected utility gain. Hence, we need an additional step where the clients form a shared expectation of the utility gain before making their decision. Our first result here is a sufficient condition for an equilibrium to exist. Then, we proceed to show that the dynamic would converge to a stable equilibrium.

\paragraph{Inferring Coalition from Shared Expectation}
First, we investigate how the clients infer the coalition composition based on the expected number of samples $K$ in the coalition. Assuming that the number of local samples $n_i$ and the personal cost $c_i$ for each client $i \in [M]$ are public information, the server can separate the utility gain function into two parts: a fixed gain using data from other clients in the coalition and the additional gain by having their personal data in the coalition. That is, for a client $i$ with $n_i$ local samples, the utility gain for joining the coalition is
\begin{align}
    U_i(\{n_j\}_{j\in S})
    = \underbrace{\frac{1}{-N_S} - \left( \frac{\sum_{j \in S} n^2_j}{N^2_S} - 3 \right) \sigma^2_\theta}_{\text{fixed gain}} + \underbrace{\frac{1}{n_i} + \left( \frac{2n^2_i}{N^2_S} - \frac{4n_i}{N_S} \right) \sigma^2_\theta}_{\text{additional gain}}
    \label{eq:heterogeneous-utility-split}
\end{align}

Let $z_i = \frac{1}{n_i} + \left( \frac{2n^2_i}{N^2_S} - \frac{4n_i}{N_S} \right) \sigma^2_\theta - c_i$ be the difference between the additional gain from joining the coalition and the cost for client $i$. Since the first part of the utility gain function is fixed for all clients, we can index the clients in descending order of $z_i$. Then, the population can infer a set of clients who will join the coalition according to this ordering. Formally, this inferred set of clients contains the first $t$ clients in the population whose total number of samples is larger or equal to $K$, \ie $t = \argmin_x \sum_{j=1}^x n_j \geq K$. Then, we can define $S_I(K) = \{1, \dots, t\}$ to be the inferred coalition with the clients indexed in descending order of $z_i$. 

In the definition of $t$, the expected number of samples $K$ may not exactly match the inferred number of samples in the coalition. 
We make the following \emph{tie-breaking} assumption: even if the expectation $K$ does not exactly matches the inferred number of samples in the coalition, the population can still infer that the coalition contains clients with an index from $1$ to $t$ as these clients have the most incentives to join the coalition. These clients can compute their utility $U_i(K) = U_i(\{n_j\}_{j\in S_I(K)})$ and decide to join or not by comparing it with their cost $c_i$. We define the realization mapping $h(K)$ as the sum over samples of clients in the inferred coalition whose utility gain is larger or equal to their cost, \ie 
\begin{align}
    h(K) &= \sum_{j=1}^t n_j \cdot \1\{U_j(K) \geq c_j\} \nonumber \\
    &= \sum_{j=1}^t n_j \cdot \1\left\{ \frac{-1}{N_{S_I(K)}} - \left(\frac{\sum_{j=1}^t n_j^2}{N_{S_I(K)})^2} \right)\sigma^2_\theta - z_i \geq c_j\right\} 
\label{eq:mapping-heterogeneous}
\end{align}
where $N_{S_I(K)} = \sum_{j \in S_I}^t n_j$ is the total number of samples in the inferred coalition. With this inferred coalition, we can analyze the self-fulfilling expectation equilibrium in this setting. The rest of the analysis in this section follows the same proof technique as the previous setting where all clients have the same number of local samples. The first result here is that we can prove \Cref{thm:convergence} in the mean estimation problem when clients have different numbers of local samples:
\begin{corollary} In the mean estimation problem where clients have different numbers of local samples, the realization mapping is $h(K) = \sum_{j=1}^t n_j \cdot \1\{ U_j \geq c_j\}$ where $U_j(K)$ is defined in \Cref{eq:mapping-heterogeneous}.  
For any positive $K>0$ with $h(K)\ge K$, we can find a self-fulfilling expectation equilibrium $h(K^*) = K^*$ to the right of $K$ with $K^*\ge K$; and for any point in between $K'\in (K, K^*)$, we have $h(K')>K'$.
\label{thm:sufficient-condition-heterogeneous}
\end{corollary}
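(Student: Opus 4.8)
The plan is to reduce the claim to \Cref{thm:convergence}: that theorem already produces an integral equilibrium $K^*\ge K$ together with the strict inequality $h(K')>K'$ on the open interval $(K,K^*)$ for \emph{any} monotonically non-decreasing realization mapping. Hence it suffices to verify that the heterogeneous mapping $h$ defined in \Cref{eq:mapping-heterogeneous} is monotonically non-decreasing in $K$. Once that is in hand, the fixed-point/crossing argument producing $K^*$ and the behavior on $(K,K^*)$ are inherited verbatim, exactly as the homogeneous \Cref{thm:sufficient-condition-homogeneous} inherits them. Note that here both $K$ and $h(K)$ are measured in samples rather than client counts, so the monotonicity statement is in the sample variable.

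First I would unpack how $h(K)$ depends on $K$, which happens through two channels. The expectation $K$ enters only via the inferred coalition $S_I(K)=\{1,\dots,t\}$, where $t=t(K)$ is the smallest prefix (in the $z_i$-descending order) whose samples sum to at least $K$. By construction $t(K)$ is non-decreasing in $K$, so as $K$ grows the inferred coalition can only acquire additional clients and the coalition total $N_{S_I(K)}$ weakly increases. The first step is therefore to record this monotone growth of $S_I(\cdot)$ as a set-inclusion statement, which follows directly from the prefix definition of $t$ together with the \emph{tie-breaking} assumption that keeps $S_I(\cdot)$ well-defined when $K$ does not match a prefix sum exactly.

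The crux is the second step: showing that enlarging the inferred coalition can only weakly increase each surviving client's utility $U_j(K)$, so that the decision set $\{\,j\le t : U_j(K)\ge c_j\,\}$ grows with $K$ and hence $h(K)$, the sample-weighted size of this set, is non-decreasing. Using the split in \Cref{eq:heterogeneous-utility-split}, client $j$ joins iff the shared fixed gain meets the threshold $-z_j$, so I would analyze how the fixed gain and the joining condition move as clients are appended to the prefix. Concretely, appending a client raises $\frac{-1}{N_S}$ and the $-\frac{4n_i}{N_S}\sigma_\theta^2$ contribution, while the competing $\frac{\sum_j n_j^2}{N_S^2}$ and $\frac{2n_i^2}{N_S^2}$ terms must be controlled. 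The \emph{main obstacle} is precisely this sign analysis: the $\sigma_\theta^2$-weighted quadratic terms pull in opposite directions as $N_S$ grows, and because each $z_i$ itself depends on $N_S$ the $z$-ordering is not fixed a priori. I would address this by fixing the coalition total within each individual evaluation of $h(K)$ (so the ordering is stable while computing a single value) and then comparing the consecutive coalitions produced by consecutive values of $K$, bounding the adverse quadratic terms so the net movement of the joining condition is monotone.

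Finally, once monotonicity of $h$ is established the conclusion is immediate: for any $K$ with $h(K)\ge K$, \Cref{thm:convergence} yields the desired $K^*\ge K$ with $h(K^*)=K^*$ and $h(K')>K'$ for every $K'\in(K,K^*)$. I would close by observing that $h$ is bounded above by the total population sample count, so the induced crossing sequence terminates and $K^*$ is finite and attained.
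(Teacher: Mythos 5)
Your top-level route is the same as the paper's. The paper's appendix proof of this corollary does exactly two things: (i) it runs the iteration $K' \leftarrow h(K')$, arguing termination because the total sample count $N_M$ bounds $h$, and then uses monotonicity of $h$ to turn $h(h(K')) \le h(K')$ into $h(h(K')) = h(K')$; and (ii) it argues, in a separate paragraph (``Necessary condition: non-decreasing $h(\cdot)$''), that the heterogeneous realization mapping is non-decreasing. Your proposal is a faithful reorganization of these same two ingredients --- you simply cite \Cref{thm:convergence} for step (i) instead of inlining it, which is legitimate given how the paper states that theorem, and you correctly note that everything is now measured in samples.

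The genuine gap is the step you yourself flag as the crux and then leave as a plan: that appending clients to the inferred coalition can only weakly increase each incumbent's utility. This is not a technicality that ``bounding the adverse quadratic terms'' will repair; it is false in general under the paper's own utility formula. Using \Cref{eq:utility-gain}, write $m = N_S - n_i$ and $A = \sum_{j \ne i} n_j^2$, so that $u_i = \frac{1}{n_i} - \frac{1}{N_S} + \bigl(\frac{m^2 - A}{N_S^2} + \frac{2m}{N_S}\bigr)\sigma^2_\theta$. Take $n_i = 1$ with ten other unit-sample clients ($N_S = 11$, $m = 10$, $A = 10$): then $u_i \approx 0.909 + 2.562\,\sigma^2_\theta$. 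Now append one client with $n_\ell = 100$ ($N_S = 111$, $m = 110$, $A = 10010$): then $u_i \approx 0.991 + 2.152\,\sigma^2_\theta$. For any $\sigma^2_\theta > 0.2$ the incumbent's utility strictly drops, so with suitable costs $c_i$ lying between these two values the decision set $\{\,j \le t : U_j(K) \ge c_j\,\}$ loses members as $K$ grows, and $h$ need not be non-decreasing without further assumptions (comparable sample sizes, or small $\sigma^2_\theta$). To be fair, the paper has the same soft spot: its monotonicity paragraph simply asserts that a client's ``utility gain in $S$ is larger than utility gain in $S'$'' without proof, so your proposal reproduces the paper's argument at the paper's own level of rigor. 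But as a standalone proof it is incomplete, and the specific repair you sketch cannot succeed unconditionally: a correct completion must either impose an assumption under which coalition growth provably raises every incumbent's utility, or else take monotonicity of $h$ as a hypothesis of the corollary (as \Cref{thm:convergence} does), rather than derive it from the mean-estimation utilities.
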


Finally, we also show that \Cref{thm:convergence-stable}  also apply to this setting and the dynamic will converge to a stable equilibrium:
\begin{corollary}[Convergence to stable equilibrium] In the mean estimation problem where clients have different numbers of local samples, if an expectation $K$ is initialized between a tipping point $K'$ and a stable equilibrium $K^*$, then the dynamic will converge to $K^*$. 
\label{cor:convergence-heterogeneous}
\end{corollary}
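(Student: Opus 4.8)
The plan is to deduce this corollary from the general \Cref{thm:convergence-stable} by verifying its single structural hypothesis in the heterogeneous mean estimation setting, namely that the realization mapping $h(\cdot)$ of \Cref{eq:mapping-heterogeneous} is monotonically non-decreasing. Once monotonicity is established, the dynamics are nothing more than the iteration of the map $h$, and the argument becomes identical to the homogeneous case (\Cref{cor:convergence-homogeneous}): a monotone map together with the fixed-point structure around $K^*$ forces convergence.

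First I would verify that $h$ is monotone non-decreasing. Fix $K_1 \le K_2$. Because the inferred coalition is defined through $t = \argmin_x \sum_{j=1}^x n_j \ge K$ with clients ordered in descending $z_i$, and the partial sums $\sum_{j=1}^x n_j$ are strictly increasing in $x$, the cutoff index $t$ is non-decreasing in $K$; hence the inferred coalitions nest, $S_I(K_1) \subseteq S_I(K_2)$. I would then show that the set of actual joiners — the indices $j \le t$ with $U_j(K) \ge c_j$ — can only grow as $K$ increases, which yields $h(K_1) \le h(K_2)$. This step carries the real content and is the \textbf{main obstacle}: enlarging the inferred coalition changes both the normalizer $N_{S_I(K)}$ and the cross term $\sum_j n_j^2$ entering each client's utility in \Cref{eq:heterogeneous-utility-split}, so it is not immediate that adding clients cannot push an existing member below its participation threshold. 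I expect to resolve this by exploiting the descending $z_i$ ordering — so that the marginal joiners are exactly the clients with the least incentive — together with monotonicity of the fixed-gain term as the coalition grows, mirroring how the composition $h = c^{-1}\circ U$ was shown to be monotone in the homogeneous analysis.

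Given monotonicity, I would conclude exactly as in \Cref{thm:convergence-stable}. Suppose the initial expectation $K$ lies strictly between a tipping point $K'$ and a stable equilibrium $K^*$. Combining \Cref{def:stable-equilibrium} with the ``no intermediate equilibrium'' property established for this setting in \Cref{thm:sufficient-condition-heterogeneous} (every $K'' \in (K', K^*)$ satisfies $h(K'') > K''$ when $K' < K^*$), the first update strictly increases the expectation. Monotonicity of $h$ and $h(K^*) = K^*$ then give $K < h(K) \le K^*$, and inductively the sequence $K, h(K), h(h(K)), \dots$ is strictly increasing and bounded above by $K^*$. Since we consider only integral equilibria, this integer-valued sequence stabilizes after finitely many steps at a fixed point of $h$ in $(K', K^*]$; as $K^*$ is the only such fixed point, the dynamic converges to $K^*$.

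Finally, I would note that the symmetric case — a tipping point lying above $K^*$, so that $h(K'') < K''$ on the relevant interval — produces a strictly decreasing sequence bounded below by $K^*$, which likewise terminates at $K^*$. Both cases together cover an initial expectation placed between a tipping point and a stable equilibrium, completing the reduction to \Cref{thm:convergence-stable}.
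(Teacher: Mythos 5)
Your proposal is sound, but it reaches the result by a genuinely different route than the paper. The paper's own proof is a client-level perturbation narrative: it starts at an equilibrium, distinguishes whether it is stable or a tipping point, and then walks through three concrete cases (a client joins; a client joins while another with equal $z$-value but fewer samples leaves; a client leaves), using the descending-$z_i$ ordering to argue that each perturbation cascades in one direction until a stable equilibrium is reached --- the bounding and termination of that cascade are left implicit (``this process continues until\dots''). You instead reduce the corollary to \Cref{thm:convergence-stable} at the level of the map $h$: monotonicity of $h$, strict increase on $(K',K^*)$ via the no-intermediate-equilibrium property, no overshoot past $K^*$ since $h(K_m)\le h(K^*)=K^*$, and termination of a strictly increasing integer-valued sequence at the unique fixed point. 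This is tighter and addresses the statement as written (an expectation initialized between $K'$ and $K^*$), whereas the paper actually analyzes perturbations away from an equilibrium; your extraction of $h(K'')>K''$ on the whole interval does need one extra line (apply \Cref{thm:sufficient-condition-heterogeneous} just above the tipping point and use that $K'$ and $K^*$ are adjacent equilibria, which the corollary implicitly assumes), but that is routine. The one piece you defer --- monotonicity of $h$ in the heterogeneous setting, which you rightly call the main obstacle --- is exactly what the paper establishes in the standalone paragraph ``Necessary condition: non-decreasing $h(\cdot)$'' immediately preceding its proof of this corollary, so you may simply cite it rather than re-derive it. Be aware, though, that your sketch and the paper's argument share the same unaddressed subtlety there: the quantities $z_i$ depend on $N_{S_I(K)}$, so the claim that the client ordering (and hence the nesting $S_I(K_1)\subseteq S_I(K_2)$) is preserved as $K$ grows requires more care than either argument currently gives.
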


\section{Network Effects in FL with Utility Oracle}
\label{sec:utility-oracle}
In the previous analysis, we rely on the assumption that all clients are trying to solve a common global mean estimation problem to derive the exact utility gain function. The clients then make their opt-in/opt-out decisions based on this expected utility gain value. How would the clients' behavior change if the server and the clients do not know how to calculate the utility gain, but instead observe it from some oracle for some general learning problem?  In this section, we will study this question assuming the underlying utility gain function is monotonically non-decreasing with regard to the total number of samples in the coalition. Since the utility gain function \Cref{eq:homogeneous-utility-gain} in the homogeneous setting satisfies our monotonicity assumption and we are considering a more general learning problem, the utility oracle setting is strictly more general than the mean estimation setting. First, we formalize our definition of self-fulfilling expectation equilibrium with utility oracle. Then, we show that the dynamic will always converge to a self-fulfilling expectation equilibrium with proof by induction. 

\subsection{Client/Server Interaction with a Utility Oracle} 
Consider a population of $M$ clients who want to solve a common learning problem over a time horizon $T>0$. At $t=0$, there exists a non-empty arbitrary coalition $S^{(0)}$ consisting of some clients in the population. Suppose at each time step $t$, there is an oracle that broadcast a message to all the clients about the true utility of joining the coalition based on the previous time step $t-1$ regardless of their current participation status. That is, given the coalition $S^{(t)}$ on time step $t$, a client $i \in [M]$ knows $U_i^{(t)} = u(\{n_j\}_{j \in S^{(t)}}, i)$ if client $i$ is currently in the coalition. Otherwise, if $i$ is not in the coalition, then client $i$ can know $U_i^{(t)} = u(\{ n_j\}_{j \in S^{(t)} \cup \{i\}}, i)$. With this observed utility, at the next time step $t+1$, clients $i \in [M]$ would opt-in the coalition if $U_i^{(t)} \geq c_i$ and  leave the coalition otherwise.  

In our analysis, we assume that this observed oracle utility only depends on the size of the coalition and the opt-in/opt-out decisions of other clients in the coalition. This assumption aligns with how the utility gain function was defined for the homogeneous setting in \Cref{eq:homogeneous-utility-gain}. With this interaction protocol, we note that there is no longer a need for the population to form an expectation over the size of the coalition at each time step. Instead, clients $i \in [M]$ would use the oracle utility of the previous time step $t-1$ as the expected utility for time step $t$, \ie $\E[U_i^{(t)}] = U_i^{(t-1)}$. We rewrite the definition of self-fulfilling expectation equilibrium for the oracle utility setting as:

\begin{definition}[Self-fulfilling expectation equilibrium with oracle utility] Consider a population in which all clients observe oracle utility at time step $t$ with $K$ samples in the coalition. If the actual number of samples in the coalition at time step $t+1$ is also $K$, then $K$ is called a self-fulfilling expectation equilibrium. 
\end{definition}
That is, at every time step, the clients can best respond to the observed oracle utility and change their participation status. Since clients are behaving in a myopic way with limited information, if the coalition stays the same in two consecutive time steps, then it would also stay the same for the rest of the time horizon. With our assumption that the utility gain only depends on the number of samples in the coalition, this condition is equivalent to the coalition having the same size in two consecutive time steps. 
\subsection{Existence of Self-fulfilling Expectation Equilibrium}
We prove that there always exists an equilibrium if the observed utility gain for all clients in the population satisfies the monotonicity assumption. Informally, if the coalition grows in size in the first few time steps, all clients in the population would observe an increased utility gain from participating in the federated learning process. Hence, new clients $j \notin S$ with $U_i^{(t)} \geq c_i$ would join the coalition while opted-in clients $i \in S$ do not have the incentive to leave. Since the size of the coalition does not decrease at each time step, this effect would cascade until the coalition has reached an equilibrium, where the extra utility from having more clients is not sufficient for new clients to join the coalition. On the other hand, if the initial coalition contains some clients with very high costs who would leave in the first few time steps, then all clients would observe a decreased utility gain in future time steps. Hence, clients who have not already joined the coalition would no longer want to participate in the federated learning process. Moreover, clients who joined in previous time steps might leave once their decreased utility from a smaller coalition becomes lower than their personal cost. Therefore, the total number of samples in the coalition would shrink until we have reached a self-fulfilling expectation equilibrium. 

\begin{theorem} Given a population where all clients observe the utility oracle regardless of their participation status, the dynamic will converge to a self-fulfilling expectation equilibrium. More specifically, there exists a time step $t$, where $S^{(t)}$ is a self-fulfilling expectation equilibrium and $S^{(t+1)} = S^{(t)}$ thereafter.
\label{thm:utility-oracle-convergence}
\end{theorem}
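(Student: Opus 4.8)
The plan is to recast the one-step dynamic as a \emph{best-response operator} $\Phi$ on the lattice of coalitions $2^{[M]}$, where $\Phi(S)$ denotes the coalition formed at the next time step when the current coalition is $S$, so that $S^{(t+1)} = \Phi(S^{(t)})$. The first step is to unify the opt-in rule: whether client $i$ is currently in or out of $S$, the utility it observes is precisely the utility it would receive \emph{as a member}, namely $u$ evaluated at $S \cup \{i\}$. Hence $i \in \Phi(S)$ if and only if $u(\{n_j\}_{j \in S \cup \{i\}}, i) \geq c_i$. Because this utility is monotonically non-decreasing in the total sample count $N_{S \cup \{i\}}$, each client admits a threshold $\tau_i$ with $i \in \Phi(S) \iff N_{S \cup \{i\}} \geq \tau_i$. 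This threshold form is the workhorse for everything that follows.

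The second step is the key structural fact: $\Phi$ is \emph{monotone} with respect to set inclusion, \ie $S \subseteq S'$ implies $\Phi(S) \subseteq \Phi(S')$. This is immediate from the threshold characterization, since $N_{S \cup \{i\}} \leq N_{S' \cup \{i\}}$ when $S \subseteq S'$, so the membership test for $i$ can only flip from failing to passing as the coalition grows. With monotonicity established, the convergence argument is a short induction on $t$, split into the two regimes anticipated in the informal discussion. If the first transition is expansive, $S^{(0)} \subseteq S^{(1)}$, then applying the monotone $\Phi$ to both sides gives $S^{(1)} = \Phi(S^{(0)}) \subseteq \Phi(S^{(1)}) = S^{(2)}$, and by induction $S^{(0)} \subseteq S^{(1)} \subseteq S^{(2)} \subseteq \cdots$ is an increasing chain in the finite lattice $2^{[M]}$; such a chain must stabilize, and at the step where $S^{(t+1)} = S^{(t)}$ we have a fixed point of $\Phi$, which is exactly a self-fulfilling expectation equilibrium. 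The contractive case $S^{(0)} \supseteq S^{(1)}$ is symmetric, producing a decreasing chain that stabilizes from above. Once the trajectory enters a fixed point, the coalition (and hence its sample count) is unchanged thereafter, giving the ``$S^{(t+1)} = S^{(t)}$ thereafter'' conclusion.

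The \textbf{main obstacle} is the first transition, which need not be comparable to $S^{(0)}$: some high-cost incumbents may drop out while some low-cost outsiders simultaneously join, so that $S^{(1)}$ and $S^{(0)}$ are incomparable and neither clean regime applies verbatim. To close this gap I would lean on the fact that the equilibrium notion is defined on the \emph{number of samples} $N^{(t)} := N_{S^{(t)}}$ rather than on the identity of the coalition. Since $2^{[M]}$ is finite and $\Phi$ is deterministic, the trajectory is eventually periodic, so it suffices to show the sample count is constant on the eventual cycle. On such a cycle one can extract certificates from the threshold rule (a departing client witnesses $N^{(\cdot)} < \tau_i$ and a joining client witnesses $N_{S \cup \{i\}}^{(\cdot)} \geq \tau_i$) and push monotonicity through the cycle by taking the union $U = \bigcup_k S^{(t+k)}$ and intersection $I = \bigcap_k S^{(t+k)}$, which satisfy $\Phi(U) \supseteq U$ and $\Phi(I) \subseteq I$ and therefore sandwich the cycle between two genuine fixed points. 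I expect ruling out oscillation of $N^{(t)}$ to be the delicate part: monotone operators can in principle support period-two orbits, so this step appears to need the full strength of the assumption that the oracle utility depends on the coalition only through its size (and, in the homogeneous instantiation of \Cref{thm:convergence}, the equality of all $n_i$, so that no single client's move can swing the sample count), in order to force consecutive sample counts to coincide.
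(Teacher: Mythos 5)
Your operator setup and the two comparable cases are fine, but the proof has a genuine gap exactly where you flag it, and the fix you gesture at cannot work in the formulation you chose. The trouble is created by your very first step: by unifying the opt-in rule as ``client $i$ tests $u$ at $S \cup \{i\}$,'' you make a member's test ($N_S \geq \tau_i$) and a non-member's test ($N_S + n_i \geq \tau_i$) differ by the own-sample offset, so the next coalition is \emph{not} a function of the current sample count alone. Under that rule the claimed conclusion is actually false: take two clients with $n_1 = n_2 = 1$, a non-decreasing oracle $u$, and costs with $u(1) < c_1 \leq u(2)$ and $u(1) < c_2 \leq u(2)$. From $S = \{1\}$, client $1$ observes $u(1) < c_1$ and leaves, while client $2$ observes $u(2) \geq c_2$ and joins; from $\{2\}$ the roles swap, and the dynamic oscillates $\{1\} \to \{2\} \to \{1\} \to \cdots$ forever (with $n_1 = 3$, $n_2 = 1$ and thresholds $\tau_1 = \tau_2 = 4$, even the sample count oscillates $3,1,3,1,\dots$). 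This is precisely a period-two orbit of a monotone lattice map, which you correctly suspected could exist; your sandwich $\Phi(I) \subseteq I$, $U \subseteq \Phi(U)$ only brackets the cycle between two fixed points and can never collapse it.

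The paper's proof avoids this by reading the oracle differently: every client, member or not, best-responds to the same observed utility of the \emph{current} coalition, $U_i^{(t)} = u_i(K_t)$, a non-decreasing function of the current sample count $K_t$ only, with no own-sample offset (this is what ``clients use the oracle utility of the previous time step as the expected utility'' means there). Then $S^{(t+1)} = \{ i : u_i(K_t) \geq c_i \}$ is determined by the scalar $K_t$ alone, so the entire dynamic factors through a one-dimensional iteration $K_{t+1} = h(K_t)$ with $h$ non-decreasing. Orbits of monotone scalar maps are monotone: if $K_2 \geq K_1$ then $K_3 = h(K_2) \geq h(K_1) = K_2$, and symmetrically if $K_2 < K_1$; this sign-propagation is exactly the induction the paper runs. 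A monotone integer sequence bounded by the total number of samples stabilizes, and since the next coalition is a function of the current count, $K_{t+1} = K_t$ forces $S^{(t+2)} = S^{(t+1)}$, giving stabilization of the set itself, not just its size. So the incomparability of $S^{(1)}$ and $S^{(0)}$ --- the obstacle your lattice approach cannot get past --- is simply irrelevant in the paper's argument, because only counts enter the state. Your instinct that the size-only assumption must be used ``at full strength'' was right, but the way to use it is to collapse the state space to the scalar count rather than to work in $2^{[M]}$, and doing so requires dropping the $S \cup \{i\}$ evaluation from your first step.
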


This result highlights the dynamic behavior of the population under the network effects model where clients are behaving in a myopic way with limited information. Under a mild assumption of monotonically increasing utility w.r.t the number of samples in the coalition and with any non-empty initial coalition, the dynamic always converges to a self-fulfilling expectation equilibrium. The monotonicity assumption is often justifiable---since all local samples are drawn i.i.d, having more samples in the coalition would intuitively produce a model with better accuracy in the mean estimation task. While the assumption of utility oracle is often not realistic, its inclusion removes the need for a closed-form expression for the utility gain function, which was previously needed in our analysis. As a result, our analysis here can also be applied to other learning problems that are not mean estimation as long as the utility gain for each client satisfies the monotonicity assumption.   

\section{Network Effects in FL with Payment}
\label{sec:payment}
In the previous analysis, we implicitly assumed that the initial shared expectation and the initial coalition are non-empty. This assumption is made to prevent the dynamic from getting stuck at the trivial stable self-fulfilling expectation equilibrium of $0$. Without any intervention from the server, no client would voluntarily join the coalition. In this section, we look at a possible solution to this problem as the server can start the dynamic from $0$ by offering external incentives to a subset of clients. We begin with modeling a mechanism design problem where at every time step, the server wants to find the smallest payment that can kick start a dynamic convergence to the largest equilibrium. We then provide a characterization of the optimal payment mechanism for the server that relies on the number of tipping points and stable equilibria in the dynamic.

\subsection{Payment Mechanism as an Optimization Problem}
First, we consider the same setting where a population of clients wants to learn a common learning problem but the server can offer to subsidize the personal cost of some clients by waiving their enrollment fee for using the server's aggregated model. Moreover, the server also provides extra payment to offset the privacy loss from sharing the client's personal model trained on their own data samples. That is, a client $i \in [M]$ with personal cost $c_i$ can have their entire cost subsidized by the server such that they can join the coalition for 'free'. We assume that at every time step, the server can broadcast a take-it-or-leave-it payment message to a subset of clients. For the server, if it pays $K$ clients to join the coalition, then the total amount of payment $P$ is the sum of the personal cost for these $K$ clients.

The mechanism design problem for the server can be seen as an optimization problem. Given the personal costs of all clients, we want to find the smallest payment such that the dynamic converges to the largest equilibrium. That is, for some small perturbation $\epsilon > 0$, all clients $j$ not in the coalition at the time horizon $T$ will not join the coalition even if there are $\epsilon$ more samples in the coalition, and no clients in $S$ have incentives to leave. Formally, we have:
\begin{align*}
    &\min P \\
    \text{s.t.} \quad &\begin{cases}\forall j \notin S, U_j^{(T)}(N_S + \epsilon) < c_j \\
    \forall i \in S, U_i^{(T)}(N_S) > c_i \end{cases}
\end{align*}

Alternatively, if the server has a payment budget $B>0$, then the optimization problem becomes finding the highest equilibrium point that the dynamic can possibly reach with a total payment of less than $B$. Due to the stability property of some equilibria in the dynamic, the server would approach these two optimization problems with unlimited and limited budgets with the same strategy. In intuition, the optimal strategy involves paying a specific subset of clients at each time step to move the dynamic to a higher self-fulfilling expectation equilibrium. Since the number of paid clients is fixed at each time step, the server's payment strategy is the same for both scenarios. From this point, we focus our analysis on the unlimited budget setting. 

\subsection{Payment in Mean Estimation Setting}
First, we look at the mean estimation problem where all clients have the same number of local samples. If the initial shared expectation is $0$, then we know that the expected utility of joining the coalition is also $0$. Hence, no client would voluntarily join the coalition. From our analysis in \Cref{sec:homogenenous-setting}, if the expectation is between a tipping point and a stable equilibrium then the dynamic would converge to the stable equilibrium. Hence, the server can minimize their total payment by first broadcasting a payment message to the first $K_0$ clients with the lowest personal cost, where $K_0$ is the smallest tipping point. If the initial coalition has fewer than $K_0$ clients, then it would quickly collapse back to the trivial stable equilibrium of $0$. Once the coalition has reached a tipping point, the server can help move the dynamic toward a higher tipping point simply by getting one new client to join. This new client would increase the expected utility gain of being in the coalition enough that other clients would also consider joining the coalition. This payment process would stop once the coalition has reached the highest equilibrium. We formalize the optimal payment strategy for the homogeneous setting as:

\begin{theorem}[Incentive with payment in the mean estimation setting] Consider a population in which all clients have the same number of local samples $n$ and a server that can repeatedly send payments to incentivize new clients at every round. The server would move the dynamic toward the highest equilibrium with the minimum total payment by sending a payment message to one new client with the smallest personal cost at every tipping point or to $K_h$ new clients at every stable equilibrium, where $K_h$ is the difference in the number of clients between the $h$-th stable equilibrium and its closest right tipping point. 

\label{thm:payment-homogeneous}
\end{theorem}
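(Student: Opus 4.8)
The plan is to reduce everything to the one–dimensional map $h(K) = c^{-1}(U(K,n))$, which is monotone non-decreasing by the discussion preceding \Cref{thm:sufficient-condition-homogeneous}. Using \Cref{thm:convergence} together with \Cref{def:stable-equilibrium}, I would list the integer equilibria of $h$ in increasing order: starting from the trivial stable equilibrium $K_0^s = 0$, stable equilibria and tipping points must interleave, so I write them as $0 = K_0^s < K_0^t < K_1^s < K_1^t < \cdots < K_{\max}^s$, where $K_h^t$ is the tipping point immediately to the right of the stable equilibrium $K_h^s$ and $K_{\max}^s$ is the largest stable equilibrium. On each growth interval $(K_h^t, K_{h+1}^s)$ we have $h(K) > K$, while on each valley $(K_h^s, K_h^t)$ we have $h(K) < K$; this is the content of \Cref{thm:convergence-stable} and \Cref{cor:convergence-homogeneous}, and is essentially the only structural fact needed. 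In the statement $K_h = K_h^t - K_h^s$.

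For feasibility I would argue by induction on $h$ that the prescribed payments drive the coalition from $K_h^s$ to $K_{h+1}^s$. When the coalition sits at $K_h^s$, clients $1,\dots,K_h^s$ participate voluntarily; paying the next $K_h = K_h^t - K_h^s$ cheapest clients (indices $K_h^s+1,\dots,K_h^t$) raises the coalition to exactly the tipping point $K_h^t$, at which expectation $h(K_h^t)=K_h^t$ so those paid clients now join voluntarily, and paying one further client (index $K_h^t+1$) raises the size to $K_h^t + 1$. Since $K_h^t + 1$ lies in the growth interval where $h(K) > K$, \Cref{thm:convergence-stable} guarantees the coalition then grows on its own and converges to $K_{h+1}^s$ with no further payment; by monotonicity of $U(\cdot,n)$ every previously paid client has $U(K_{h+1}^s, n) \ge U(K_h^t+1,n) \ge c_i$ at the new equilibrium, so their subsidy is released. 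Iterating until $K_{\max}^s$, which has no tipping point to its right, shows the dynamic terminates at the highest equilibrium, with total payment $\sum_h \sum_{i=K_h^s+1}^{K_h^t+1} c_i$.

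For optimality I would show this sum lower-bounds any scheme reaching $K_{\max}^s$. Since the equilibria are ordered and $h$ is monotone, every successful trajectory must, for each $h$, contain a step at which the coalition first becomes strictly larger than $K_h^t$: otherwise the contracting valley $h(K) < K$ on $(K_h^s,K_h^t)$ together with \Cref{thm:convergence-stable} pulls the coalition back into the basin of $K_h^s$. Because no voluntary growth is possible while the realized size stays within that valley, the coalition can be carried from $K_h^s$ up to $K_h^t+1$ only by paid clients directly contributing the additional $K_h+1$ members; pausing at an intermediate size inside the valley merely forces those members to be re-paid each step, increasing the bill. Hence at least $K_h+1$ distinct clients must be paid to cross valley $h$, and by the ascending cost order the cheapest such choice is exactly $K_h^s+1,\dots,K_h^t+1$. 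Paying anyone while the coalition is in a growth interval is wasteful, as \Cref{thm:convergence-stable} supplies that progress for free, so summing the per-valley minima reproduces the achievable total.

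The hard part will be the optimality direction, and specifically pinning down the payment-accounting model so the lower bound is airtight: I must rule out multi-step schemes that raise the expectation partway up a valley to harvest extra voluntary joiners and then pay less, by showing any in-valley coalition is unsustainable without continued payment, so the one-shot push that pays each of the $K_h+1$ crossing clients exactly once is genuinely cheapest. A secondary subtlety is the integer boundary at the tipping point: I must justify that reaching exactly $K_h^t$ does not suffice (it is itself an unstable equilibrium, neither growing nor pinned under perturbation), which is precisely what forces the extra "one new client at the tipping point" and hence the $K_h+1$ rather than $K_h$ count per valley. The remaining steps — monotonicity giving the prefix structure of the coalition and the release of paid clients once they become self-sustaining — are routine given the earlier results.
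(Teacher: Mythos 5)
Your overall strategy is the same as the paper's: pay the cheapest prefix of clients to lift the coalition from a stable equilibrium to the next tipping point, pay exactly one additional client at the tipping point, let \Cref{thm:convergence-stable} supply the growth to the next stable equilibrium for free, and argue optimality by noting that any smaller payment inside a ``valley'' collapses back to the stable equilibrium below. Your optimality accounting (each paid client's subsidy is released once the expectation reaches the tipping point, and any in-valley pause forces re-payment) is in fact \emph{more} explicit than the paper's proof, which essentially asserts these facts in two sentences.

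However, there is one genuine flaw: your structural premise that stable equilibria and tipping points strictly interleave, $0 = K_0^s < K_0^t < K_1^s < K_1^t < \cdots$, is false in general, and your induction step leans on it when you claim that $K_h^t + 1$ ``lies in the growth interval where $h(K) > K$.'' Consecutive integer equilibria can all be tipping points with no stable equilibrium between them; the paper's own running example ($c(K) = U(K,n)$ for every $K$, so every $K$ is an equilibrium and all of $1,\dots,M-1$ are tipping points) is exactly this situation. There, after paying one client at a tipping point the coalition lands on \emph{another} tipping point, no voluntary growth occurs, and the server must pay again --- which is why the paper's proof contains the explicit branch ``if this new equilibrium is also a tipping point, then the server only needs to pay for one new client,'' and why that example yields total payment $\sum_{i=1}^M c_i$. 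The repair is easy and does not change your cost accounting: drop the interleaving assumption and phrase the induction on the sorted list of equilibria, with the rule ``at any tipping point pay one client; at any stable equilibrium pay up to the next tipping point,'' letting the free-growth step apply only when the next equilibrium above $K_h^t+1$ is reached through a region where $h(K)>K$. Your per-valley lower bound ($\sum_\tau P_\tau \ge$ total rise in coalition size, since $h(R) \le R$ throughout the valley) survives unchanged and actually covers the degenerate case as well.
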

We observe that the server can only save on their payment by relying on the stability property of some self-fulfilling expectation equilibrium. Hence, if a dynamic has an equilibrium near $0$ and few tipping points, then a server following the strategy in \Cref{thm:payment-homogeneous} would have to spend much less than the naive strategy of fully paying every client to be in the coalition. On the other hand, if the dynamic has a lot of tipping points or if the coalition requires a lot of new clients to move from a stable equilibrium to a higher tipping point, then following the payment strategy above does not vastly improve upon the naive strategy of paying every client. Specifically, we revisit the example where every value $K$ is a self-fulfilling expectation equilibrium. 

\vspace{-.1in}
\paragraph{Example: equilibrium at every point.} Revisit the example where the cost of joining is the same as the utility gain at every value $K$, \ie $\forall K \in [M]: c(K) = U(K,n)$. By definition, every value $K$ is a self-fulfilling expectation equilibrium. If the initial expectation is $0$, the server has to first pay the client with the lowest cost. At this point, the coalition has one client and has reached a tipping point. To move the dynamic towards a higher equilibrium, the server sends out another payment to the next client with the second lowest cost. This process stops when all clients have joined the coalition. Since the server has to pay for a new client every time, the total  payment is $P = \sum_{i=1}^M c_i$. 

\begin{remark} Since the clients who join the coalition also receive some utility from the joint model, we can also consider a more efficient payment scheme where the server only pays for the difference between a client's cost and their expected utility gain. When all clients have the same number of local samples, they also have the same expected utility gain so the server would minimize their total payment by selecting clients with smaller costs first. This payment strategy is similar to our original strategy when the server has to pay for a client's entire cost. 
\end{remark}

\subsection{Payment in Utility Oracle Setting}
We can generalize the payment structure above in the more general utility oracle setting when the initial coalition is empty. Similar to the mean estimation setting, at each time step, the server needs to rely on the stability property of self-fulfilling expectation equilibrium to reduce their total payment amount. That is, at each time step, the server wants to incentivize enough new clients to move the dynamic towards a higher self-fulfilling expectation equilibrium. 
Let client $i$'s utility function be $U_i(n)$ when the number of samples in the coalition is $n$. Then, we can define the mapping $h(\cdot)$ for this setting as the total number of samples from clients with higher expected utility gain than personal cost, \ie
\begin{align*}
h(n) = \sum_{i: U_i(n)\ge c_i} n_i.
\end{align*}
Suppose the initial coalition is empty, \ie $N_S^{(0)}=0$ and let $n_{next}$ be the next tipping point with $h(n_{next}) \ge n_{next}$, that is
\begin{align*}
n_{next} = \arg\min\{j: j> N_S^{(t)}, h(j)\ge j\}.
\end{align*}
Then, our goal in the current time step is to move to $n_{next}$ with the minimum payment. Let $\overline C=\{i: U_i^{t}(N_S)<c_i\}$ be the set of clients who are not already in the coalition at the current time step $t$. Then, we can solve an optimization problem to move to $n_{next}$ as
\begin{align*}
\min & \sum_{i: i\in \overline C} x_i \cdot (c_i - U_i^{(t)}(N_S))\\
\text{s.t} & \sum_{i:i\in\overline C} x_i \cdot n_i \ge n_{next}- N^{(t)}_S.
\end{align*}
where $x_i\in \{0,1\}$. This is a knapsack problem that can be solved by dynamic programming. This process is repeated until we have reached the highest self-fulfilling expectation equilibrium. 
\section{Discussion}
We have initiated the study of clients' behaviors in FL under a network effects game model, where each client's participation decision changes as the network grows over time. We showed that whenever the entire population shares an expectation on the coalition size, we can  characterize the dynamics of the client's participation game for the mean estimation problem and a more general setting with a utility oracle. We identified self-fulfilling expectation equilibrium points, where the coalition after all clients have computed the best-response strategy matches the shared expectation. Moreover, we showed that without explicit intervention from the server, the client's participation dynamic naturally converges to self-fulfilling expectation equilibria. Based on this characterization, we proposed a cost-efficient payment scheme that incentivizes clients so that the final coalition reaches a desirable equilibrium. 

Our framework offers  insights into  clients' participation dynamics in FL. A limitation of our work is that we did not consider more realistic learning problems, and also assumed assumed that all clients are trying to solve a global problem instead of a personalized approach. Despite these limitations, we believe that our work provides a meaningful contribution to the study of incentives in FL, and  hope that continuing to understand client participation dynamics can lead to the design of explicit incentive mechanisms that induce desirable final outcomes in practical FL applications. 

\section*{Acknowledgement}
ZSW, DN were supported in part by the NSF FAI Award \#1939606, NSF SCC Award \#1952085, a Google Faculty Research Award, a J.P. Morgan Faculty Award, a Facebook Research Award, and a Mozilla Research Grant. Any opinions, findings, conclusions, or recommendations expressed in this material are those of the authors and not necessarily reflect the views of the National Science Foundation and other funding agencies.

\bibliographystyle{abbrvnat}
\bibliography{bib}
\newpage
\appendix
\section{Appendix: Utility Gain in Mean Estimation Problem}
To derive the utility gain from joining a coalition, we first need to derive the expected error (MSE) for any client $i$. Here, the expectation in MSE is taken over random draw of $\mu_i \sim \calN(\theta, \sigma^2_\theta)$, random draw of local data from its distribution, and random draw of test points.  

First, we derive the local MSE for client $i$ if they do not join the coalition:
\begin{lemma}[Local MSE]
The local expected MSE for a client with $n_i$ samples is:
\begin{equation}
    MSE(w_i) = \frac{1}{n_i} + \sigma^2_\theta
\end{equation}
\end{lemma}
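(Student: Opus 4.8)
The plan is to compute the MSE directly by exploiting the two-level generative structure of the data: the local mean $\mu_i$ is drawn from the prior $\calN(\theta, \sigma_\theta^2)$, and then the samples $x_{i,1}, \dots, x_{i,n_i}$ are drawn i.i.d. from $\calN(\mu_i, 1)$ conditioned on $\mu_i$. Since the client's estimator is the empirical mean $w_i = \frac{1}{n_i}\sum_{j=1}^{n_i} x_{i,j}$ and the relevant error is measured against the global mean $\theta$, I would first condition on $\mu_i$ and then take the outer expectation over the prior, i.e. apply the law of total variance to separate the within-client sampling noise from the across-client prior spread.

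First I would establish that $w_i$ is unbiased for $\theta$: conditioning on $\mu_i$ gives $\E[w_i \mid \mu_i] = \mu_i$, and taking the outer expectation over $\mu_i \sim \calN(\theta, \sigma_\theta^2)$ yields $\E[w_i] = \theta$. Consequently the MSE $\E[(\theta - w_i)^2]$ equals the variance $\mathrm{Var}(w_i)$, which I can split as $\E[\mathrm{Var}(w_i \mid \mu_i)] + \mathrm{Var}(\E[w_i \mid \mu_i])$.

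The two terms are then computed separately. Given $\mu_i$, the estimator $w_i$ is an average of $n_i$ independent unit-variance draws, so $\mathrm{Var}(w_i \mid \mu_i) = \frac{1}{n_i}$, and its outer expectation is again $\frac{1}{n_i}$. For the second term, $\E[w_i \mid \mu_i] = \mu_i$, hence $\mathrm{Var}(\E[w_i \mid \mu_i]) = \mathrm{Var}(\mu_i) = \sigma_\theta^2$ by the prior. Adding the two pieces gives $MSE(w_i) = \frac{1}{n_i} + \sigma_\theta^2$. Equivalently, I could decompose $\theta - w_i = (\theta - \mu_i) + (\mu_i - w_i)$ and expand the square, where the cross term vanishes because $\E[\mu_i - w_i \mid \mu_i] = 0$ while $\theta - \mu_i$ is $\mu_i$-measurable.

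The computation is routine and there is no genuine obstacle; the only point requiring care is bookkeeping the two sources of randomness so that the sampling noise and the prior deviation are cleanly separated and the cross term is shown to vanish. This is precisely what the conditioning argument handles, after which the result is immediate.
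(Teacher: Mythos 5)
Your proof is correct and follows essentially the same approach as the paper: both condition on $\mu_i$ to separate the sampling noise $\nicefrac{1}{n_i}$ from the prior spread $\sigma^2_\theta$ and show the cross term vanishes, your law-of-total-variance framing being just a repackaging of the paper's expansion of $[(w_i-\mu_i)+(\mu_i-\theta)]^2$ (which you also mention as an equivalent route). The only minor difference is that you derive the term $\mathrm{Var}(w_i \mid \mu_i) = \nicefrac{1}{n_i}$ directly as the variance of a sample mean, whereas the paper cites \citet{modelsharing2020donahue} for it, making your argument slightly more self-contained.
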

\begin{proof}
By definition of MSE, we have:
\begin{align}
    &\quad MSE(w_i)\\
    &= \E_{\substack{\mu_i \sim \calN(\theta,\sigma^2_\theta)\\ X_i \sim \calN(\mu_i, 1)}}[(w_i - \theta)^2]\\
    &= \E_{\substack{\mu_i \sim \calN(\theta, \sigma^2_\theta) \\ X_i \sim \calN(\mu_i, 1)}}[[(w_i - \mu_i) + (\mu_i - \theta)]^2]\\
    &= \E_{\substack{\mu_i \sim \calN(\theta,\sigma^2_\theta)\\ X_i \sim \calN(\mu_i,1)}}[(w_i - \mu_i)^2] + \E_{\mu_i \sim \calN(\theta,\sigma^2_\theta)}[(\mu_i - \theta)^2] + 2\E_{\substack{\mu_i \sim \calN(\theta,\sigma^2_\theta)\\ X_i \sim \calN(\mu_i,1)}}[(w_i - \mu_i)(\mu_i - \theta)]\\
    &= \frac{1}{n_i} + \sigma_{\theta}^2 + 2\E_{\substack{\mu_i \sim \calN(\theta,\sigma^2_\theta)\\X_i \sim \calN(\mu_i,1)}}[(w_i - \mu_i)(\mu_i - \theta)]
\end{align}
where the bound on first term comes from \citet{modelsharing2020donahue} and the second bound comes from definition of variance. For the third term, since the expectation is taken over $X_i$ and $\mu_i$, we have:
\begin{align}
&\quad 2\E_{\substack{\mu_i \sim \calN(\theta,\sigma^2_\theta) \\ X_i \sim \calN(\mu_i, 1)}}\left[(w_i - \mu_i)(\mu_i - \theta)\right]\\
&= \E_{\mu_i \sim \calN(\theta,\sigma^2_\theta)}\left[\E_{X_i \sim \calN(\mu_i, 1)}[(w_i - \mu_i)(\mu_i - \theta) | \mu_i]\right]\\
&= \E_{\mu_i \sim \calN(\theta,\sigma^2_\theta)}\left[(\mu_i - \theta) \E_{X_i \sim \calN(\mu_i,1)}[w_i - \mu_i | \mu_i]\right]\\
&= \E_{\mu_i \sim \calN(\theta,\sigma^2_\theta)}\left[ (\mu_i - \theta) (\mu_i - \mu_i) \right]\\
&= 0
\end{align}
Hence, the local MSE for client $i$ is:
\begin{align}
    MSE(w_i) = \frac{1}{n_i} + \sigma_{\theta}^2
\end{align}
\end{proof}
If client $i$ join the coalition $S$, instead of using their own local model, they will use the shared model as an estimate for the global parameter $\theta$. Note that we do not consider any form of personalization in this section as everyone in the coalition $S$ share the same estimator. Hence, the global MSE for client $i \in S$ is:
\begin{lemma}[Coalition MSE]
The global expected MSE for a client with $n_i$ samples in coalition $S$ is:
\begin{equation}
	MSE(w_S) = \frac{1}{N_S} +  \left(\frac{\sum_{i\neq j}n_j^2 + (N_S - n_i)^2}{N_S^2} + \frac{2n_i}{N_S}-1\right)\sigma_{\theta}^2
\end{equation}
where $N_S = \sum_{j \in S} n_j$ is the total number of samples in the coalition.
\end{lemma}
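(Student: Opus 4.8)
The plan is to mirror the proof of the preceding Local MSE lemma, computing the coalition error directly as $MSE(w_S) = \E[(w_S - \theta)^2]$ via a bias--variance decomposition that passes through the client-specific mean $\mu_i$. This is exactly the route that produces the stated form of the answer: the $n_i$-dependent terms arise because client $i$ contributes its own samples to the shared estimator $w_S$, so the error around $\theta$ does not cleanly separate from $\mu_i$ the way it did in the local case.

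First I would rewrite the coalition estimator as a pooled sample mean. Since $w_j = \frac{1}{n_j}\sum_{k=1}^{n_j} x_{j,k}$, we have $n_j w_j = \sum_{k=1}^{n_j} x_{j,k}$, hence $w_S = \frac{1}{N_S}\sum_{j\in S} n_j w_j = \frac{1}{N_S}\sum_{j\in S}\sum_{k=1}^{n_j} x_{j,k}$. I would then introduce the noise decomposition $x_{j,k} = \theta + \delta_j + \epsilon_{j,k}$, where $\delta_j = \mu_j - \theta \sim \calN(0,\sigma_\theta^2)$ and $\epsilon_{j,k} = x_{j,k} - \mu_j \sim \calN(0,1)$ are all mean zero and mutually independent, which gives
\begin{equation*}
w_S - \mu_i = \frac{1}{N_S}\sum_{j\in S} n_j \delta_j - \delta_i + \frac{1}{N_S}\sum_{j\in S}\sum_{k=1}^{n_j}\epsilon_{j,k}.
\end{equation*}
Following the local proof, I would expand $(w_S - \theta)^2 = ((w_S - \mu_i) + (\mu_i - \theta))^2$ into three expectations and evaluate each. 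For $\E[(w_S-\mu_i)^2]$, independence and mean-zero kill every cross-covariance, leaving the $\delta$-contribution $\sigma_\theta^2\left(\sum_{j\neq i} n_j^2/N_S^2 + (n_i/N_S - 1)^2\right)$ — the coefficient on $\delta_i$ is $n_i/N_S - 1$, not $n_i/N_S$, precisely because $i$ appears in both the pooled average and the target $\mu_i$ — plus the $\epsilon$-contribution $N_S/N_S^2 = 1/N_S$; rewriting $(n_i/N_S-1)^2 = (N_S - n_i)^2/N_S^2$ yields $\E[(w_S-\mu_i)^2] = \frac{1}{N_S} + \frac{\sum_{j\neq i}n_j^2 + (N_S - n_i)^2}{N_S^2}\sigma_\theta^2$. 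The term $\E[(\mu_i-\theta)^2] = \sigma_\theta^2$ is immediate, and the cross term is $2\E[(w_S-\mu_i)\delta_i] = 2(n_i/N_S - 1)\sigma_\theta^2$. Summing the three pieces and collecting the $\sigma_\theta^2$ coefficients ($\frac{2n_i}{N_S} - 2 + 1 = \frac{2n_i}{N_S} - 1$) reproduces the claimed expression.

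The hard part — and the only genuine departure from the local computation — is the cross term $2\E[(w_S - \mu_i)(\mu_i - \theta)]$. In the local MSE lemma this term vanished because $w_i - \mu_i$ had conditional mean zero given $\mu_i$; here it does \emph{not} vanish, since $w_S$ contains client $i$'s own samples and therefore $w_S - \mu_i$ retains a $\delta_i$-dependence that is correlated with $\mu_i - \theta = \delta_i$. Correctly tracking the $-1$ shift in the $\delta_i$ coefficient and the resulting nonzero cross term is exactly what produces the $+\frac{2n_i}{N_S}-1$ correction; the remaining work is routine second-moment bookkeeping over independent Gaussians. As a sanity check I would verify that the stated coefficient collapses to $\sum_{j\in S} n_j^2/N_S^2$, confirming that the formula indeed equals $\E[(w_S-\theta)^2]$.
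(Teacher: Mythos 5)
Your proof is correct, and it shares the paper's skeleton: both expand $MSE(w_S)=\E[(w_S-\theta)^2]$ through the intermediate $\mu_i$ into the three terms $\E[(w_S-\mu_i)^2]$, $\E[(\mu_i-\theta)^2]=\sigma_\theta^2$, and the cross term $2\E[(w_S-\mu_i)(\mu_i-\theta)]$, and both identify the nonvanishing cross term as the genuine departure from the local lemma. Where you differ is in how the terms are evaluated. The paper imports $\E[(w_S-\mu_i)^2]=\frac{1}{N_S}+\frac{\sum_{j\neq i}n_j^2+(N_S-n_i)^2}{N_S^2}\sigma_\theta^2$ wholesale from \citet{modelsharing2020donahue}, and computes the cross term by the tower property: conditioning gives $\E[w_S\mid\{\mu_j\}]=\frac{1}{N_S}\sum_{j}n_j\mu_j$, and then the Gaussian moments $\E[\mu_i\mu_j]=\theta^2$ (for $j\neq i$) and $\E[\mu_i^2]=\theta^2+\sigma_\theta^2$ yield $-\frac{2(N_S-n_i)}{N_S}\sigma_\theta^2$, which is exactly your $2(n_i/N_S-1)\sigma_\theta^2$. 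Your route instead writes $x_{j,k}=\theta+\delta_j+\epsilon_{j,k}$ and reads off every term as second moments of independent mean-zero variables; this buys self-containedness (the Donahue term is derived rather than cited) and turns the cross term into a one-line covariance read-off, at the cost of slightly more bookkeeping. Your closing sanity check --- that the full coefficient collapses to $\sum_{j\in S}n_j^2/N_S^2$, so $MSE(w_S)=\frac{1}{N_S}+\frac{\sum_{j\in S}n_j^2}{N_S^2}\sigma_\theta^2$ independently of $i$ --- is a nice verification the paper does not include. One small quibble with your prose, not your algebra: the cross term survives not because $w_S$ contains client $i$'s own samples, but because it also contains everyone else's; indeed when $S=\{i\}$ the coefficient $n_i/N_S-1$ vanishes and you recover the local lemma, whereas a coalition excluding $i$ would give coefficient $-1$ and a cross term of $-2\sigma_\theta^2$. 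The computation you carried out is nevertheless exactly right.
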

\begin{proof}
By definition of MSE, we have:
\begin{align}
	&\quad MSE(w_S)\\
	&= \E_{\substack{\mu_i \sim \calN(\theta,\sigma^2_\theta)\\X_j \sim \calN(\mu_j,1), \forall j \in S}}[(w_S - \theta)^2]\\
	&= \E_{\substack{\mu_i \sim \calN(\theta,\sigma^2_\theta)\\X_j \sim \calN(\mu_j,1), \forall j \in S}}[[(w_S - \mu_i)+(\mu_i - \theta)]^2]\\
	&= \E_{\substack{\mu_i \sim \calN(\theta,\sigma^2_\theta)\\X_j \sim \calN(\mu_j,1), \forall j \in S}}[(w_S - \mu_i)^2] + \E_{\mu_i \sim \calN(\theta,\sigma^2_\theta)}[(\mu_i - \theta)^2]\\
	&\quad + 2\E_{\substack{\mu_i \sim \calN(\theta,\sigma^2_\theta)\\X_j \sim \calN(\mu_j,1), \forall j \in S}}[(w_S - \mu_i)(\mu_i - \theta)]\\
	&= \E_{\substack{\mu_i \sim \calN(\theta,\sigma^2_\theta)\\X_j \sim \calN(\mu_j,1), \forall j \in S}}[(w_S - \mu_i)^2] + \sigma_{\theta}^2 + 2\E_{\substack{\mu_i \sim \calN(\theta,\sigma^2_\theta)\\X_j \sim \calN(\mu_j,1), \forall j \in S}}[(w_S - \mu_i)(\mu_i - \theta)]\\
	&= \frac{1}{N_S} +  \left(\frac{\sum_{i\neq j}n_j^2 + (N - n_i)^2}{N_S^2} + 1\right)\sigma_{\theta}^2 + 2\E_{\substack{\mu_i \sim \calN(\theta,\sigma^2_\theta)\\X_i \sim \calN(\mu_i,1)}}[(w_S - \mu_i)(\mu_i - \theta)]
\end{align}
    where the first term comes from \citet{modelsharing2020donahue}. Note that the expectation is taken over random draws of mean parameter $\mu_j$ and local samples for all clients $j$ in coalition $S$. Hence, for the third and final term, we have:
\begin{align}
    &\quad 2\E_{\substack{\mu_i \sim \calN(\theta,\sigma^2_\theta)\\X_j \sim \calN(\mu_j,1), \forall j \in S}}[(w_S - \mu_i)(\mu_i - \theta)]\\
    &= 2\E_{\mu_i \sim \calN(\theta, \sigma_{\theta}^2)}\left[ \E_{X_j \sim \calN(\mu_j, 1), \forall j \in S}[(w_S - \mu_i)(\mu_i - \theta) | \mu_i] \right]\\
    &= 2\E_{\mu_i \sim \calN(\theta,\sigma^2_\theta)}\left[ (\mu_i - \theta) \E_{X_j \sim \calN(\mu_j,1), \forall j \in S}[w_S - \mu_i | \mu_i] \right]\\
    &=  2\E_{\mu_i \sim \calN(\theta,\sigma^2_\theta)}\left[ (\mu_i - \theta) (\E_{X_j \sim \calN(\mu_j,1), \forall j \in S}[w_S| \mu_i] - \mu_i )\right]\\
    &=  2\E_{\mu_i \sim \calN(\theta,\sigma^2_\theta)}\left[ (\mu_i - \theta) \left(\E_{X_j \sim \calN(\mu_j,1), \forall j \in S}\left[\frac{1}{N} \sum_{j \in S} w_j n_j \Bigg| \mu_i \right] - \mu_i \right)  \right]\\
    &= 2\E_{\mu_i \sim \calN(\theta,\sigma^2_\theta)} \left[ (\mu_i - \theta) \left( \frac{\sum_{j\in S} \mu_j n_j}{N} - \mu_i \right) \right]\\
    &= 2\E_{\mu_i \sim \calN(\theta,\sigma^2_\theta)} \left[(\mu_i - \theta) \left( \frac{\sum_{j\in S, j\neq i} n_j (\mu_j - \mu_i)}{N} \right) \right]\\
    &= \frac{2}{N} \E_{\mu_i \sim \calN(\theta,\sigma^2_\theta)} \left[(\mu_i - \theta)\left(\sum_{j\in S, j\neq i} n_j (\mu_j - \mu_i)\right) \right]
\end{align}
Take expectation over all $\mu_j$
\begin{align}
    &\quad \frac{2}{N} \E_{\mu_i,\mu_j \sim \calN(\theta,\sigma_{\theta}^2),j\in S,j\neq i} \left[(\mu_i - \theta)\left(\sum_{j\in S, j\neq i} n_j (\mu_j - \mu_i)\right) \right]\\ &= \frac{2}{N}\sum_{j\in S,j\neq i}n_j\E_{\mu_i,\mu_j \sim \calN(\theta,\sigma_{\theta}^2),j\in S,j\neq i} \left[(\mu_i-\theta)(\mu_j-\mu_i)\right]\\
    &=\frac{2}{N}\sum_{j\in S,j\neq i}n_j\E_{\mu_i,\mu_j \sim \calN(\theta,\sigma_{\theta}^2),j\in S,j\neq i} \left[\mu_i(\mu_j-\mu_i)\right]\\
    &= \frac{2}{N}\sum_{j\in S,j\neq i}n_j(\theta^2-\theta^2-\sigma_{\theta}^2)\\
    &=-\frac{2(N-n_i)}{N}\sigma_{\theta}^2
\end{align}
Substituting this expression into the $MSE(w_S)$ calculation, we obtain:
\begin{align}
    MSE(w_S) &= \frac{1}{N_S} +  \left(\frac{\sum_{i\neq j}n_j^2 + (N_S - n_i)^2}{N_S^2} + \frac{2n_i}{N_S}-1\right)\sigma_{\theta}^2
\end{align}
\end{proof}
Finally, we can derive the utility gain of client $i$ from joining the coalition. Formally, the utility gain $u_i$ for client $i$ is the difference in mean squared error between not joining the coalition (local MSE) and joining the coalition (coalition MSE):
\begin{lemma}[Utility gain] 
The utility gain of client $i$ from joining the coalition and using the shared model $w_S$ is:
\begin{equation}
    u(\{n_j\}_{j\in S}, i) = -\frac{1}{N_S}+\frac{1}{n_i}-\left(\frac{\sum_{i\neq j}n_j^2 - (N_S - n_i)^2}{N_S^2}-\frac{2(N_S-n_i)}{N_S}\right)\sigma_{\theta}^2
\label{eq:utility-gain}
\end{equation}
where $N_S = \sum_{j \in S} n_j$.
\end{lemma}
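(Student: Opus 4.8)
The plan is to recognize that this final lemma carries no new probabilistic content: by the definition quoted immediately before it, the utility gain is simply the local error minus the coalition error, $u(\{n_j\}_{j\in S}, i) = MSE(w_i) - MSE(w_S)$. Both terms are already supplied in closed form by the two preceding lemmas, so the entire proof is the substitution of those two expressions followed by an algebraic regrouping. No expectation needs to be recomputed.

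Concretely, I would first substitute $MSE(w_i) = \frac{1}{n_i} + \sigma_\theta^2$ and $MSE(w_S) = \frac{1}{N_S} + \left(\frac{\sum_{i\neq j} n_j^2 + (N_S - n_i)^2}{N_S^2} + \frac{2n_i}{N_S} - 1\right)\sigma_\theta^2$ and take their difference. The sample-size contributions peel off immediately: the $\frac{1}{n_i}$ from the local error and the $-\frac{1}{N_S}$ from the coalition error produce the two leading terms $-\frac{1}{N_S} + \frac{1}{n_i}$ of the claimed expression, and all remaining effort lives in the coefficient of $\sigma_\theta^2$.

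The central step is therefore collecting that $\sigma_\theta^2$ coefficient. The local error contributes a bare $+1$, while the coalition error contributes the negation of its bracketed coefficient; combining them yields $2 - \frac{\sum_{i\neq j} n_j^2 + (N_S - n_i)^2}{N_S^2} - \frac{2n_i}{N_S}$. To put this into the bracketed form displayed in the statement, the single identity needed is $2 - \frac{2n_i}{N_S} = \frac{2(N_S - n_i)}{N_S}$, which absorbs the stray constant $2$ into the linear term so that the whole coefficient reorganizes into $-\left(\frac{\sum_{i\neq j} n_j^2 + (N_S-n_i)^2}{N_S^2} - \frac{2(N_S-n_i)}{N_S}\right)$.

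The main obstacle is purely bookkeeping, but of the kind that is easy to get wrong: one must correctly pair the isolated $+\sigma_\theta^2$ from the local error against the $-1$ hidden inside the coalition coefficient, and---most delicately---carry the sign of the $(N_S-n_i)^2$ term through the outer negation, since this is exactly the place where a stray sign would slip in. I would pin the sign down by exploiting the cleaner equivalent form of the coalition error: because $w_S$ is unbiased for $\theta$, one can check that $MSE(w_S) = \frac{1}{N_S} + \frac{\sum_{j\in S} n_j^2}{N_S^2}\sigma_\theta^2$, whence $u = \frac{1}{n_i} - \frac{1}{N_S} + \left(1 - \frac{\sum_{j\in S}n_j^2}{N_S^2}\right)\sigma_\theta^2$. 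This compact form can be expanded and matched against the stated expression term by term, and it forces $u = 0$ in the degenerate case $S = \{i\}$ (where $N_S = n_i$ and the sum over $j\neq i$ is empty), giving a clean sanity check that fixes every sign in the $\sigma_\theta^2$ coefficient.
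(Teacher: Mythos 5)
Your approach is exactly the paper's (implicit) proof: the lemma is obtained by subtracting the coalition MSE from the local MSE, and your bookkeeping of that subtraction is correct up to the final step. But that final step --- ``matched against the stated expression term by term'' --- would fail if you actually carried it out, because what your derivation produces is
\begin{equation*}
u(\{n_j\}_{j\in S},i) \;=\; \frac{1}{n_i}-\frac{1}{N_S}-\left(\frac{\sum_{j\neq i}n_j^2 \,+\, (N_S-n_i)^2}{N_S^2}-\frac{2(N_S-n_i)}{N_S}\right)\sigma_\theta^2,
\end{equation*}
with a \emph{plus} sign on $(N_S-n_i)^2$, whereas the statement you were asked to prove has $\sum_{j\neq i}n_j^2 \,-\, (N_S-n_i)^2$. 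The two expressions are not equal: they differ by $\frac{2(N_S-n_i)^2}{N_S^2}\sigma_\theta^2$, which is nonzero whenever the coalition contains anyone besides client $i$. Your compact form $u=\frac{1}{n_i}-\frac{1}{N_S}+\left(1-\frac{\sum_{j\in S}n_j^2}{N_S^2}\right)\sigma_\theta^2$ is likewise correct and agrees with the plus-sign version (using $-(N_S-n_i)^2+2N_S(N_S-n_i)=N_S^2-n_i^2$), not with the printed statement. Moreover, the sanity check you propose to ``fix every sign'' is powerless precisely where it matters: when $S=\{i\}$ we have $N_S=n_i$, so the discrepant term vanishes and both sign choices give $u=0$.

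To be clear about where the fault lies: the two preceding MSE lemmas are correct, and their difference is the plus-sign formula, so the minus sign in \eqref{eq:utility-gain} is an error in the paper itself --- one that propagates into the homogeneous expression, whose $\sigma_\theta^2$-coefficient should be $\frac{K-1}{K}$ rather than $\frac{3K^2-5K+2}{K^2}$. So your mathematics reproduces (and in effect corrects) the paper's intended argument; but as a blind proof of the statement as printed, your proposal asserts a term-by-term match that does not hold, and a careful execution of your own plan would have been obliged to flag this inconsistency rather than conclude the lemma as stated.
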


\paragraph{Homogeneous setting} When every client in the population has the same number of samples $n = n_i \forall i \in [M]$, we can simplify the utility gain formulation above to be a function of the number of clients in the coalition $K$ and the number of local samples $n$. 

\begin{lemma}[Utility gain under fixed $n_i$]
Assume that $n_i=n$ for all $i$. The utility gain of client $i$ from joining the coalition with $K$ clients and using the shared model $w_S$ is:
\begin{equation}
    u(K,n) = \frac{K-1}{Kn} + \frac{3K^2 - 5K + 2}{K^2}  \sigma^2_{\theta}
\end{equation}
\end{lemma}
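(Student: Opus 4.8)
The plan is to specialize the general utility-gain expression from the preceding lemma, namely
\[
u(\{n_j\}_{j\in S}, i) = -\frac{1}{N_S}+\frac{1}{n_i}-\left(\frac{\sum_{i\neq j}n_j^2 - (N_S - n_i)^2}{N_S^2}-\frac{2(N_S-n_i)}{N_S}\right)\sigma_{\theta}^2,
\]
to the homogeneous regime $n_i = n$ for all $i$ with $\abs{S} = K$. First I would record the three reductions this substitution produces, since these are the only places the coalition composition enters: $N_S = Kn$; the cross-term $\sum_{i\neq j} n_j^2 = (K-1)n^2$, because there are exactly $K-1$ indices $j \neq i$ each contributing $n^2$; and $N_S - n_i = (K-1)n$. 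Once these are in hand the remainder is pure algebra.

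Second, I would handle the part not multiplying $\sigma_\theta^2$. Substituting gives $-\frac{1}{Kn} + \frac{1}{n}$, which collapses to $\frac{1}{n}\cdot\frac{K-1}{K} = \frac{K-1}{Kn}$, reproducing the first term of the claim.

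Third, and this is where the bookkeeping matters, I would simplify the coefficient of $\sigma_\theta^2$. Substituting the three reductions into the parenthesized expression yields $\frac{(K-1)n^2 - (K-1)^2 n^2}{K^2 n^2} - \frac{2(K-1)n}{Kn}$. The factors of $n$ cancel throughout; factoring $(K-1)$ out of the first fraction and using $1-(K-1)=2-K$ turns it into $\frac{(K-1)(2-K)}{K^2}$, while the second fraction becomes $\frac{2K(K-1)}{K^2}$. Combining over the common denominator $K^2$ gives $\frac{(K-1)(2-3K)}{K^2}$, and the leading minus sign on the whole $\sigma_\theta^2$ block flips this to $\frac{(K-1)(3K-2)}{K^2}$. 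Expanding the numerator, $(K-1)(3K-2) = 3K^2 - 5K + 2$, which is exactly the second term of the claim.

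The main (and essentially only) obstacle is sign-tracking in the third step: the $-(N_S - n_i)^2$ inside the fraction and the overall minus sign on the $\sigma_\theta^2$ term must both be carried correctly, and it is easy to mishandle the simplification $1-(K-1)=2-K$ that makes the common-denominator step clean. There is no conceptual difficulty beyond careful arithmetic, so the result follows by direct substitution into the general formula.
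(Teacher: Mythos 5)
Your proposal is correct and follows essentially the same route as the paper's proof: direct substitution of $N_S = Kn$, $\sum_{i \neq j} n_j^2 = (K-1)n^2$, and $N_S - n_i = (K-1)n$ into the general utility-gain formula of \Cref{eq:utility-gain}, followed by algebraic simplification. The only cosmetic difference is that you factor out $(K-1)$ to get $(K-1)(3K-2)$ before expanding, whereas the paper expands the numerator term by term; both land on $3K^2 - 5K + 2$ identically.
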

\begin{proof}
Substitute $n = n_i, \forall i$ and $N_S = Kn$ in the utility gain expression in \Cref{eq:utility-gain}, we have:
\begin{align}
    u(K,n,i) &= -\frac{1}{N_S}+\frac{1}{n_i}-\left(\frac{\sum_{i\neq j}n_j^2 - (N_S - n_i)^2}{N_S^2}-\frac{2(N_S-n_i)}{N_S}\right)\sigma_{\theta}^2\\
    &= \frac{-1}{Kn} + \frac{1}{n} - \left(\frac{(K - 1)n^2 - (Kn - n)^2}{(Kn)^2} - \frac{2(Kn - n)}{Kn} \right)\sigma^2_{\theta}\\
    &= \frac{K - 1}{Kn} - \left( \frac{(K-1)n^2 - K^2n^2 + 2Kn^2 - n^2}{K^2n^2} - \frac{2K - 2}{K} \right)\sigma^2_{\theta}\\
    &= \frac{K-1}{Kn} - \left(\frac{K - 1 - K^2 + 2K - 1}{K^2} - \frac{2K - 2}{K} \right)\sigma^2_{\theta}\\
    &= \frac{K-1}{Kn} + \frac{3K^2 - 5K + 2}{K^2}  \sigma^2_{\theta}
\end{align}
\end{proof}

\begin{figure}[ht]
    \centering
    \includegraphics[scale=0.5]{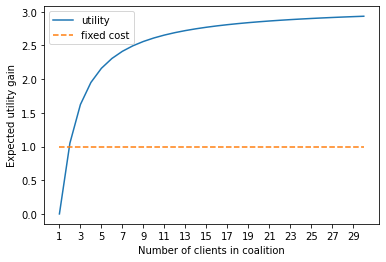}
    \caption{Utility gain with equal contribution}
    \label{fig:utility-gain}
\end{figure}

\section{Appendix: FL in Mean Estimation Problem}
In this example, suppose different clients have different costs of joining the coalition and the same number of local samples. W.L.O.G, we order the clients by their cost in ascending order. We also assume the cost of joining the coalition for the $i$-th client is $c_{\min} + c(i)$, where $c_{\min}$ is a fixed positive cost for communication and $c(i)$ is a positive increasing function. 

Since all clients joining the coalition will share the same model, we have $u = u_i$ for all $i \in [M]$. Let the equilibrium cost $c_{\min} > 0$ and the expected number of clients in coalition $K$ form a self-fulfilling expectation equilibrium, such that $c_{\min} + c(K) = u(K,n)$.

We can write the new utility gain function for the $i$-th client as:
\begin{equation}
    f(K, n, i) = u(K,n) - c(i)
\end{equation}

If all clients share a common belief that $K$ clients will join the coalition, then client $i$ would join the coalition if $f(K, n, i) \geq c_{\min}$. Hence, if any client at all would join, the set of clients joining will be between $0$ and $\hat{K}$, where $\hat{K} = \inf \{ x \in \mathbb{R}: f(K,n,i) \geq c_{\min} \}$. Hence, to find the equilibrium client $\hat{K}$, we proceed to solve the following equation: 
\begin{equation}
    f(K, n, \hat{K}) = c_{\min} 
\end{equation}
Fixing the number of local data $n$, this is equivalent to \begin{equation}
    \hat{K} = f^{-1}(c_{\min}) = c^{-1}(u(K,n) - c_{\min})
\end{equation}

\subsection{Existence of an equilibrium}
Let $h$ be the mapping from $K$ to $\hat K$, meaning that when all clients share a common belief $K$ clients will join the coalition, then the number of clients willing to join will be $\hat K = h(K)$. 

\paragraph{Proof of \Cref{thm:sufficient-condition-homogeneous}}
Suppose there exists $K$ with $h(K) \ge K$, we prove that there must exist $K^* \ge K$ with $h(K^*) = K^*$. First if $h(K)=K$, we have found $K^*=K$. So we only need to consider the case $h(K)>K$. 
When $h(K)>K$, we prove the following claim: we can find $K'\ge K$ with 
\begin{align} \label{eqn:condition}
    h(K')\ge K' \text{ and } h(h(K'))\le h(K'). 
\end{align}

We first set $K'=K$ and check whether $K'$ satisfies the condition~\eqref{eqn:condition}. If the condition is not satisfied, we set $K'$ to be its function value $h(K')$ and iterate until the condition is satisfied. We can prove that $K'$ must strictly increase at each iteration. This is because at the beginning we have $h(K')>K'$ and we only continue the iteration when $h(h(K'))>h(K')$. Therefore, the iteration must end because $K'$ must strictly increase at each iteration and the value of $K'$ cannot exceed the number of clients $M$. And when the iteration ends, we either find  $K'$ with~\eqref{eqn:condition}, or we find an equilibrium $K'=h(K')=M$.

Finally, we prove that for $K'$ with~\eqref{eqn:condition}, we have $h(h(K')) = h(K')$ so we find an equilibrium $K^* = h(K')$. We use the fact that $h(\cdot)$ must be a non-decreasing function. So we have $h(h(K'))\ge h(K')$ for $h(K') \ge K'$. Together with $h(h(K'))\le h(K')$ in~\eqref{eqn:condition}, $h(h(K')) = h(K')$.

This result can also be interpreted as, every time $\hat K$ goes above $K$, we must have an equilibrium. 

\paragraph{Equivalent Sufficient Condition}
In our formulation, the function $h(K)$ is a composition of the cost function $c(i)$ and the utility function $u(K,n)$. Formally, we have $h(K) = c^{-1}(u(K,n))$. Note that we assume the cost function $c(i)$ is monotonically increasing with regard to the client index $i$. Hence, we can apply $c(.)$ to both side of the sufficient condition in \Cref{thm:sufficient-condition-homogeneous} and get:
\begin{align*}
    h(K) &\geq K\\
    \iff c^{-1}(u(K,n)) &\geq K\\
    \iff c(c^{-1}(u(K,n))) &\geq c(K)\\
    \iff u(K,n) &\geq c(K)
\end{align*}
Hence, the sufficient condition for an equilibrium to happen can be rewritten as: a self-fulfilling expectation equilibrium exists as long as there exists some $K$ with $u(K,n) \geq c(K)$. 


Then, in the special case above, there exists only one non-trivial stable equilibrium at $K = M$. 

\paragraph{Proof of \Cref{thm:sufficient-condition-stable-homogeneous}}
We follow the same analysis as in the proof of \Cref{thm:sufficient-condition-homogeneous} and apply the definition of stable equilibrium in \Cref{def:stable-equilibrium}.   

\paragraph{Proof of \Cref{cor:convergence-homogeneous}} Suppose that the initial expectation $K$ is a self-fulfilling expectation equilibrium. We want to look at what happens when there is a slight perturbation in the actual number of people joining the coalition $\hat{K} = K$. 
There are two cases:
\begin{itemize}
    \item If $K$ is a stable equilibrium: By definition, any slight perturbation to $\hat{K}$ will make the expectation converges back to $K$.
    \item If $K$ is not a stable equilibrium: When the actual outcome $\hat{K}$ changes slightly and the shared expectation $K$ stays the same, the utility gain $u(K,n)$ does not changes. We look at how the cost of joining $c(\hat{K})$ changes when a client leaves or when a new client joins the coalition. 
    \begin{itemize}
        \item If the $K+1$-th client joins the coalition: the cost of joining for the $K+1$-th client is $c(K+1) < u(K,n)$. Since we assume the cost function is monotonically increasing (non-decreasing), the fact that the $K+1$-th client joining does not change the opt-in/opt-out decision of all previous clients. Also, since the utility function $u(K,n)$ is non-decreasing, then $u(K+1, n) > u(K,n) > c(K+1)$. Hence, the shared expectation will become $K+1$. When the next client $K+2$ decides whether to join the coalition, they will compare their personal cost $c(K+2)$ to the new shared expectation $K+1$. This process continues until we reach a new equilibrium $K'$.
        \item If the $K$-th client leaves the coalition: the $K$-th client leaves the coalition when $c(K) > u(K,n)$. Since the cost function is monotonically increasing, we have $c(K+1) > c(K) > u(K,n)$. Hence, all clients from $K+1$ to $M$ will not join the coalition. Then, the maximum number of clients in a coalition is less than $K$ and there is downward pressure on the expectation. Suppose the expectation is now reduced by $1$, i.e. the utility gain is $u(K-1, n)$. There are two cases. If the personal cost of client $K-1$ is less than the utility, i.e. $c(K-1) < u(K-1,n)$, then we arrive at a self-fulfilling expectation equilibrium and no other clients would leave the coalition. Otherwise, if we have $c(K-1) > u(K-1,n)$, then the $K-1$-th client would also leave the coalition. The expectation goes into a downward spiral until we reach some equilibrium $K'$ such that $c(K') < u(K',n)$ and $c(K'+1) > u(K'+1, n)$.
    \end{itemize}
\end{itemize}
\paragraph{Behavior of tipping points} From the previous analysis, we observe that a slight perturbation to the actual outcome at the tipping point (unstable equilibrium) will lead the expectation to move toward either a higher or lower equilibrium. If the coalition settles at a new equilibrium that is also a tipping point, then additional perturbations to the actual outcome will move the expectation to another equilibrium. This behavior only stops when the shared expectation reaches a stable equilibrium.


\paragraph{Proof of \Cref{thm:sufficient-condition-heterogeneous}}
Suppose there exists a number of samples in the coalition $h(K) \geq K$, we prove that there must exist some equilibrium $K^* \geq K$ with $h(K^*) = K^*$. If $h(K) = K$, then $K^* = K$ is the equilibrium. Else, if $h(K) > K$, we can find some $K' \geq K$ with 
\begin{equation}
    h(K') \geq K' \quad \text{and} \quad h(h(K')) \leq h(K')
\label{eq:heterogeneous-condition}
\end{equation}
First, we set $K' = K$ and check whether $K'$ satisfies the condition \cref{eq:heterogeneous-condition}. If the condition is not satisfied, we set $K' = h(K')$ and repeat the process to find $h(h(K'))$. Initially, we have $h(K') > K'$. By design, the inferred number of samples in the coalition $t'$ is always larger than or equal to the expectation $K'$. For each client in the inferred coalition, they can check if their utility gain is larger than the cost of joining. The clients with a higher cost than utility gain will not be in the coalition. There are two cases:
\begin{itemize}
    \item If $h(h(K'))> h(K')$: then we continue for another iteration.
    \item Otherwise, if $h(h(K)) \leq h(K')$: then we have found the $K'$ that satisfy condition \Cref{eq:heterogeneous-condition}.
\end{itemize}
Since there are $M$ clients in the population, we know that there are at most $N_M$ samples in the coalition (where $N_M$ is the total number of samples for every client in the population). Since we only continue the iterations when we have $h(x) > x$ for some number of samples $x$, we know that the iteration is guaranteed to end when every client join the coalition. Hence, we can always find some $K'$ that satisfy \Cref{eq:heterogeneous-condition}.

To complete the proof, we prove that for a number of samples $K'$ that satisfy \Cref{eq:heterogeneous-condition}, we have $h(h(K')) = h(K')$ so we have found an equilibrium $K^* = h(K')$. Since the $h(\cdot)$ function is a non-decreasing function by assumption, we have $h(h(K')) \geq h(K')$ for $h(K') \geq K'$. Combine this fact with \Cref{eq:heterogeneous-condition}, we have $h(h(K')) = h(K')$. 

\paragraph{Necessary condition: non-decreasing $h(\cdot)$} Let $K$ and $K'$ be two expected number of samples in the coalition such that $K > K'$. Also, let $S$ and $S'$ denote the coalition with expectation $K$ and $K'$, respectively. Observe that since $K > K'$, if a client $i$ is in the inferred coalition with expectation $K'$, they would also be in the inferred coalition with expectation $K$ since their utility gain in $S$ is larger than utility gain in $S'$ (which is larger than their personal cost). For some clients $i$ in the inferred coalition $S'_\infer$ with expectation $K'$, there are two cases to consider:
\begin{itemize}
    \item If $u_i \geq c_i$, then client $i$ would be in the coalition $S'$. Since the ordering is maintained, client $i$ would also be in coalition $S$ with expectation $K$. 
    \item If $u_i < c_i$, then client $i$ would not be in the coalition $S'$. Since we order the clients in descending order of $z$, all other clients $j \in [i, M]$ would also not join the coalition $S'$. However, since $K > K$, the expected utility gain for joining with expectation $K$ is greater than the expected utility gain for joining with expectation $K'$. Hence, there is still a possibility that client $i$ would join coalition $S$ with expectation $K$. 
\end{itemize}
Therefore, the number of clients in coalition $S$ is always greater than or equal to the number of clients in coalition $S'$. Similarly, the number of samples in $S$ is greater than or equal to the number of samples in $S'$. Hence, $h(K)$ is a non-decreasing function in $K$. 

\paragraph{Proof of \Cref{cor:convergence-heterogeneous}} Suppose that the expected number of samples in the coalition $K$ is a self-fulfilling expectation equilibrium. In the following paragraph, we describe the network dynamic when there is a slight perturbation to the actual number of samples in the coalition. Note that by definition of self-fulfilling expectation equilibrium, initially we have $\hat{K} = K$. There are two cases to consider:
\begin{itemize}
    \item If $K$ is a stable equilibrium: by definition of stable equilibrium, any slight perturbation to the actual number of samples $\hat{K}$ will lead the expectation converges back to $K$. 
    \item If $K$ is a tipping point: When the actual number of samples $\hat{K}$ changes, the client can update their shared belief and observe that the utility gain is also changed. According to our assumption, the cost of each client is a random variable independent of how many clients are actually in the coalition. Hence, there could be some clients that change their decision based on the updated utility gain. 
\end{itemize}
Specifically, we can take a look at the dynamic at the tipping point when there are $\epsilon$ additional samples or fewer samples in the coalition:
\begin{itemize}
    \item If there are $\epsilon > 0$ more samples in the coalition from some client $j$ joining: the cost of being in the coalition for a client $i \in S$ is $c_i < u(K, i)$. By assumption, the utility gain function is monotonically non-decreasing. Hence, having more samples in the coalition would increase the utility gain for all clients $i \in S$. Consider a client $j$ who was not previously in coalition $S$ has index $j > i: \forall i \in S$, their overall gain from joining the coalition $z_j < z_i: \forall i \in S$. Hence, if client $j$ join the coalition $S$, then all other clients $i \in S$ would still stay in coalition $S$. Then, the shared expectation would now be $K + \epsilon$, and the expected utility gain for all clients is increased. A new client $\ell$ who is not in $S$ would compare this new utility gain $u(K + \epsilon, \ell)$ with their personal cost $c_\ell$ and join if $u(K + \epsilon, \ell) \geq c_\ell$. This process will continue until the population reaches a stable equilibrium, where having slightly more data does not change the number of clients and samples in the coalition. 
    \item If there are $\epsilon>0$ more samples in the coalition from some client $j$ joining and some other client $\ell \in S$ leaving: since we rank the clients by their personal gain $z_i$ with arbitrary tie-breaking, there could be a case where a client $\ell \in S$ and client $j \notin S$ have $z_\ell = z_j$ and $n_\ell < n_j$. Then, we can replace client $\ell$ by client $j$ in the coalition and gain $\epsilon = n_j - n_\ell$ samples. The expected utility gain from joining the coalition increases since there are samples in $S$. Since the expected number of samples changes, the ordering of clients in the population also changes. However, similar to our previous argument in the necessary condition, the clients who are already in the coalition should still be in the coalition after having more samples. The clients who are not in the clients can recompute their expected utility gain and decide whether to join the coalition or not. Since $K$ is a tipping point, there is at least one client who is now willing to join the coalition. This process only end when the population converges to a stable equilibrium, where having more samples does not change the composition of the coalition.  
    \item If there are $\epsilon$ fewer samples in the coalition from some client $j$ leaving: since we order clients by their personal gain $z_i$, all other clients $\ell \in [j, M]$ will not join the coalition. Then, the number of samples in the coalition is fewer than $K$, and the expected utility gain is reduced. All clients in the population can recompute their utility gain according to the updated number of samples. Clients will keep leaving the coalition if their utility gain is less than their personal cost until the population is at a stable equilibrium. 
\end{itemize}

\section{Utility Oracle Appendix}
In the following analysis, we assume a more general form of the utility function. Let $U_i^{(t)} = u(N_S^{(t)})$ be the utility gain for a client $i$ at round $t$ if client $i$ is in the coalition with $N_S^{(t)}$ samples. This formulation of utility only depends on the size of the coalition and the opt-in/opt-out decisions of other clients in the coalition. 
Since all clients know the true utility regardless of whether or not they join the coalition, there is no longer a need for the population to form an expectation over the size of the coalition. In this setting, the expected utility for time step $t$ is the oracle utility of the previous time step $t-1$, \ie $\E[U_i^{(t)}] = U_i^{(t-1)}$. 

\paragraph{Example: Participation Dynamic with Utility Oracle} Suppose there are four clients in the population whose local sample sizes follow a uniform distribution. That is, if the total number of samples in the coalition is $N$, then client $1$ has $1/N$ samples, client $2$ has $2/N$ samples, client $3$ has $3/N$ samples, and client $4$ has $4/N$ samples. Furthermore, assume that each client $i$ has a fixed cost $c_i$ and $0 \approx c_1 \leq c_2 \leq c_3 \leq c_4$. 
\begin{itemize}
    \item First coalition is $S = \{1,4\}$: In the beginning, assume that client $1$ and $4$ are in the coalition: $S = \{1,4\}$ with $5/N$ samples. The oracle utility at time step $1$ is $U^{(1)}(5/N)$. Suppose at the start of the time step $2$, client $4$ observes that $U^{(1)}(5/N) < c_4$ and decides to leave. We have three events that can happen at this step:
\begin{enumerate}
    \item Both client $2$ and $3$ join: If client $2$ and $3$ observe that the expected utility from previous time step $U^{(1)}(5/N) \geq c_3 > c_2$, then they will join the coalition. Then, at the end of time step $2$, the coalition is $S = \{1,2,3\}$ with utility $U^{(2)}(6/N)$. At time step $3$, if client $4$ observes that $U^{(2)}(6/N) < c_4$, then they would still opt-out of the coalition. Otherwise, if we have $U^{(2)}(6/N) \geq c_4$, then client $4$ would join the coalition instead. For clients $1,2$ and $3$, they do not leave the coalition since $U^{(2)}(6/N) \geq U^{(1)}(5/N) \geq c_3 > c_2 > c_1$. Hence, at the end of the time step $3$, the coalition is $S = \{1,2,3,4 \}$ and utility is $U^{(3)}(N/N)$. In either case, we arrive at an equilibrium where no clients have the incentive to change their decisions. 
    
    \item Only client $2$ joins: If we have $c_3 > U^{(1)}(5/N) \geq c_2$, then only client $2$ would join the coalition at time step $2$. At the end of time step $2$, the coalition is $S = \{1,2\}$ with utility $U^{(2)}(3/N)$. At time step $3$, we have $U^{(2)}(3/N) < U^{(1)}(5/N) < c_3 < c_4$, so client $3$ and $4$ would not join the coalition. If client $2$ observes that $U^{(2)} < c_2 \leq c_2$, then client $2$ would also leave the coalition. At the end of time step $3$, the coalition is $S = \{1\}$ with utility $U^{(3)}(1/N)$. Otherwise, if we have $U^{(2)} > c_2$, then client $2$ would still stay in the coalition. In this case, the coalition utility at the end of time step $3$ is $U^{(3)}(3/N) = U^{(2)}(3/N)$ and we have arrived at an equilibrium.
    
    \item No other client joins: If we have $c_3 > c_2 > U^{(1)}(5/N)$, then no client would join the coalition. The coalition at the end of day $2$ is $S = \{1\}$ with utility $U^{(2)}(1/N)$. Due to the monotonicity assumption, we have $U^{(2)}(1/N) \leq U^{(1)}(5/N) < c_2 < c_3 < c_4$, and no other clients would join the coalition. Regardless of  the client $1$'s decision, we will arrive at an equilibrium.
\end{enumerate}
Therefore, in any of the aforementioned events, we will arrive at an equilibrium. 
\end{itemize}

In our prior analysis, we use $h(K)$ to denote the actual size of the coalition when the shared size is $K$. When the clients can observe the oracle utility, they adjust the shared expectation for time step $t$ to be the oracle utility from time step $t-1$. Hence, if $K$ is the coalition's size at round $t$, then we use $h(K)$ to denote the size of the coalition at round $t+1$. 


\paragraph{Proof of \Cref{thm:utility-oracle-convergence}}
Suppose there exists an arbitrary coalition containing $K_1$ samples formed by some clients at time step $t=1$. In the following analysis, we show the iterative process for the dynamic to converge to a self-fulfilling expectation equilibrium. 
\paragraph{At time step $2$:} Suppose there exist some clients $i \in S$ with $U^{(1)}_i(K_1) < c_1$. Then, at the start of time step $2$, these clients would leave the coalition with $N^{(2)}_L \geq 0$ samples. If no clients leave the coalition at this point, then $N^{(2)}_L = 0$, otherwise $N^{(2)}_L > 0$. At the same time, some other clients $j \in J_2: U_j^{(1)}(K_1) > c_j$ would join the coalition with $N_J^{(2)} \geq 0$ samples. Then, at the end of time step $2$, the total size of coalition $S$ is $K_2 = K_1 - N_L^{(2)} + N_J^{(1)}$. 
\paragraph{At time step $3$:} At the beginning of time step $3$, clients $i \in S$ first re-evaluate their opt-in decision by comparing their cost with the expected utility.
\begin{itemize}
    \item If we have $K_2 \geq K_1$, then by monotonicity assumption, we have $\forall i \in S: U_i^{(2)}(K_2) \geq U_i^{(1)}(K_1) > c_i$. That is, all clients who are already in the coalition gain more utility by staying and not changing their decision. Hence, the size of the coalition would not decrease at time step $3$.
    
    \item If we have $K_2 < K_1$, then by monotonicity assumption, there are some clients $i$ who will leave the coalition due to insufficient utility gain: $\exists i \in S: U_i^{(2)}(K_2) \leq c_i < U_i^{(1)}(K_1)$. Note that for other clients $j \notin S$, they observe $U_j^{(2)}(K_2) < U_j^{(1)}(K_1) < c_j$ so they would still not join the coalition at this time step. Hence, the size of the coalition would decrease at time step $3$.   
\end{itemize}
Observe that if we have $K_2 \geq K_1$, then the size of the coalition does not decrease at time step $3$, \ie $K_3 \geq K_2$. On the other hand, if $K_2 < K_1$, then we will have $K_3 < K_2$. 

Induction hypothesis: If $K_t < K_{t-1}$, then $K_{t+1} \leq K_t$. Otherwise, if $K_t \geq K_{t-1}$, then $K_{t+1} \geq K_t$.

Show that the induction hypothesis holds for time step $t+1$: 
\begin{itemize}
    \item If the number of samples in the coalition decreases at time step $t$, then we have $K_t < K_{t-1}$. By monotonicity assumption, for all clients $i \in [M]$, we have $U_i^{(t)}(K_t) < U_i^{(t-1)}(K_{t-1})$. Then at the start of time step $t+1$, the clients who have not joined the coalition in the previous time step would also not join at time step $t+1$ due to insufficient utility gain. On the other hand, clients $j\in S$ who are in the coalition at time step $t$ might leave the coalition if $U_j^{(t-1)}(K_{t-1}) \geq c_j > U_j^{(t)}$. Hence, the size of the coalition would either decrease at time step $t+1$ or stay the same (where we have found an equilibrium). 

    \item If the number of samples in the coalition does not decrease at time step $t$, then we have $K_t \geq K_{t-1}$. By monotonicity assumption, for all clients $i \in [M]$, we have $U_i^{(t)}(K_t) \geq U_i^{(t-1)}(K_{t-1})$. Hence, at the start of time step $t+1$, all clients $j \in S$ who are already in the coalition gain more utility by staying and not changing their decision. Also, there could be some clients $\ell$ previously not in the coalition with $U_{\ell}^{(t)}(K_t) \geq c_{\ell} \geq U_{\ell}^{(t-1)}(K_{t-1})$ who will join at time step $t+1$. Thus, the size of the coalition would either stay the same (where we have found an equilibrium) or increase at time step $t+1$. 
\end{itemize}
This iterated process will continue until either we have the same coalition size in two consecutive rounds (which is a self-fulfilling expectation equilibrium) or we have all clients (or no clients) in the coalition. Therefore, the dynamic with utility oracle will converge to a self-fulfilling expectation equilibrium. 

\begin{remark}
Compared to our previous analysis of the heterogeneous setting in a network effect game, we no longer need to form an inferred coalition to form an expectation over the utility gain from joining. Since all clients, regardless of being in the coalition or not, know the true utility, we can instead start the dynamic with any arbitrary coalition of clients. Then, other clients in the population can start making decisions based on the oracle utility.
\end{remark}

\section{Appendix: Incentive with Payment}
\paragraph{Proof of \Cref{thm:payment-homogeneous}}
Initially, the shared expectation is there are no clients in the coalition. If the server does not send any payment message, then there is no client willing to join the coalition and the dynamic is stuck at the trivial stable equilibrium of $0$. On the other hand, if the server sends payments to $K_0$ clients, where $K_0$ is between $0$ and the first tipping point, then the dynamic would converge back to the trivial equilibrium of $0$. Hence, to move the dynamic toward a higher equilibrium, the server needs to first send payments to $K_0$ clients, where $S = \{i\}_{i=1}^{K_0}$ is the first tipping point. To minimize the total payment needed, the server can order the population by their cost in ascending order and send payment to the first $K_0$ clients with the smallest costs. 

By definition, once the population has reached the first tipping point, the expected number of clients is equal to the actual number of clients in the coalition. Hence, no new client would join the coalition without external incentives. At a tipping point, if there is an extra client joining the coalition then the dynamic would move toward a higher equilibrium. Thus, the server would send a payment message to the client with the smallest cost who is not in $S$. With this new client joining, the expected utility gain for the next round is increased, and new clients would keep joining the coalition until the dynamic has reached a new equilibrium. If this new equilibrium is also a tipping point, then the server only needs to pay for one new client to join the coalition. On the other hand, if the new equilibrium is a stable equilibrium, then the server needs to either pay $K_1$ new clients, where $K_1$ is the difference between this stable equilibrium and the next tipping point, or $0$ if the coalition has reached the largest stable equilibrium. The number of paid clients at this step cannot be less than $K_1$ as the dynamic would converge back to the stable equilibrium.
This payment schedule would continue until the dynamic converges to the largest equilibrium. The total amount of payment is the sum of payments sent by the server at each step. In the case where there are only two stable equilibria (the trivial equilibria at $0$ and at $M$), the total amount of payment scales linearly with the number of tipping points in the dynamic. 


\paragraph{Incentive with payment when there exists a utility oracle:} Note that we assume the expected utility when there exists an oracle only depends on the number of samples in the coalition. Let client $i$'s utility function be $U_i(n)$ when the number of samples is $n$. Then we can define $h(\cdot)$ function as
\begin{align*}
h(n) = \sum_{i: U_i(n)\ge c_i} n_i.
\end{align*}
Suppose we start with $\tilde n=0$, let $n_{next}$ be the next tipping point with $h(n_{next}) \ge n_{next}$, that is
\begin{align*}
n_{next} = \arg\min\{j: j>\tilde n, h(j)\ge j\}.
\end{align*}
then our goal in the current step is to move to $n_{next}$ with a minimum payment. Let $\overline C=\{i: U_i(\tilde n)<c_i\}$ be the clients who are not in the coalition yet. Then we can solve an optimization problem to move to $n_{next}$ as
\begin{align*}
\min & \sum_{i: i\in \overline C} x_i \cdot (c_i - U_i(\tilde n))\\
\text{s.t} & \sum_{i:i\in\overline C} x_i \cdot n_i \ge n_{next}- \tilde n.
\end{align*}
where $x_i\in \{0,1\}$. This is a knapsack problem and it can be solved by dynamic programming.
\end{document}